\icmltitlerunning{Trust Region Value Optimization using Kalman Filtering} %  -  Supplemantry Material
\newtheorem{theorem}{Theorem}
\newtheorem{assumption}{Assumption}
\newtheorem{corollary}{Corollary}
\newtheorem{lemma}{Lemma}
\begin{document} 
	\twocolumn[
	\icmltitle{Trust Region Value Optimization using Kalman Filtering} % \\ - Supplemantry Material
	
	% It is OKAY to include author information, even for blind
	% submissions: the style file will automatically remove it for you
	% unless you've provided the [accepted] option to the icml2019
	% package.
	
	% List of affiliations: The first argument should be a (short)
	% identifier you will use later to specify author affiliations
	% Academic affiliations should list Department, University, City, Region, Country
	% Industry affiliations should list Company, City, Region, Country
	
	% You can specify symbols, otherwise they are numbered in order.
	% Ideally, you should not use this facility. Affiliations will be numbered
	% in order of appearance and this is the preferred way.
	\icmlsetsymbol{equal}{*}
	
	\begin{icmlauthorlist}
		\icmlauthor{Shirli Di-Castro Shashua}{tech}
		\icmlauthor{Shie Mannor}{tech}
	\end{icmlauthorlist}
	
	\icmlaffiliation{tech}{Technion, Israel}

	\icmlcorrespondingauthor{Shirli Di-Castro Shashua}{shirlidi@tx.technion.ac.il}
	\icmlcorrespondingauthor{Shie Mannor}{shie@ee.technion.ac.il}
	
	% You may provide any keywords that you
	% find helpful for describing your paper; these are used to populate
	% the "keywords" metadata in the PDF but will not be shown in the document
	\icmlkeywords{Machine Learning, ICML, Kalman}
	
	\vskip 0.3in
	]
	
	% this must go after the closing bracket ] following \twocolumn[ ...
	
	% This command actually creates the footnote in the first column
	% listing the affiliations and the copyright notice.
	% The command takes one argument, which is text to display at the start of the footnote.
	% The \icmlEqualContribution command is standard text for equal contribution.
	% Remove it (just {}) if you do not need this facility.
	
	\printAffiliationsAndNotice{}  % leave blank if no need to mention equal contribution
	%\printAffiliationsAndNotice{\icmlEqualContribution} % otherwise use the standard text.

\begin{abstract}
	Policy evaluation is a key process in reinforcement learning. It assesses a given policy using estimation of the corresponding value function. When using a parameterized function to approximate the value, it is common to optimize the set of parameters by minimizing the sum of squared Bellman Temporal Differences errors. However, this approach ignores certain distributional properties of both the errors and value parameters. Taking these distributions into account in the optimization process can provide useful information on the amount of confidence in value estimation. In this work we propose to optimize the value by minimizing a regularized objective function which forms a trust region over its parameters. We present a novel optimization method, the Kalman Optimization for Value Approximation (KOVA), based on the Extended Kalman Filter. KOVA minimizes the regularized objective function by adopting a Bayesian perspective over both the value	parameters and noisy observed returns. This distributional property provides information on parameter uncertainty in addition to value estimates. We provide theoretical results of our approach and analyze the performance of our proposed optimizer on domains with large state and action spaces. 
\end{abstract} 

\section{Introduction}
\label{introduction}
Reinforcement learning (RL) solves sequential decision making problems by considering an agent that interacts with the environment and seeks for the optimal policy \cite{sutton1998reinforcement}. During the learning process, the agent is required to evaluate its policies using a value function. In many real world RL domains, such as robotics, games and autonomous driving cars, the state and action spaces are large, hence the value function is approximated, e.g., using a Deep Neural Network (DNN). A common approach is to optimize a set of parameters by minimizing the sum of squared Bellman Temporal Differences (TD) errors \cite{dann2014policy}. There are two underlying assumptions in this approach: first, the value and its parameters are deterministic; second, the Bellman TD errors are independent Gaussian random variables (RVs) with zero mean and a fixed variance. Although being a commonly used objective function, these underlying assumptions may not be suitable for the policy evaluation task in RL. Distributional RL \cite{bellemare2017distributional} refers to the second assumption and argues in favor of a full distribution perspective over the sum of discounted rewards for a fixed policy. In particular, learning this distribution is meaningful in presence of value approximation. However, in their formulation the value parameters are still considered deterministic and they do not provide an amount of confidence for the value estimates. 

Treating the value or its parameters as RVs has been investigated in the RL literature. \citet{engel2003bayes,engel2005reinforcement} used Gaussian Processes (GP) for the value and the return to capture uncertainties in policy evaluation. \citet{geist2010kalman} proposed to use the Unscented Kalman filter (UKF) to learn the uncertainty in value parameters. Their formulation requires many samples of parameters in each training step, which is not feasible in Deep Reinforcement Learning (DRL) with large state and action spaces.

Motivated by the works of \citet{engel2003bayes,engel2005reinforcement} and \citet{geist2010kalman}, we present in this work a unified framework for addressing uncertainties while approximating the value in DRL domains. Our framework incorporates the well-known Kalman filter estimation techniques with RL principles to improve value approximation. The Kalman filter \cite{kalman1960new} and its variant for nonlinear approximations, the Extended Kalman filter (EKF) \cite{anderson1979optimal, gelb1974applied}, are used for on-line tracking and for estimating states in dynamic environments through indirect noisy observations. These methods have been successfully applied to numerous control dynamic systems such as navigation and tracking targets \cite{sarkka2013bayesian}. The Kalman filter can also be used for parameter estimation in approximation functions, where parameters replace the states of dynamic systems. \newline 

We develop a new optimization method for policy evaluation based on the EKF formulation. Figure \ref{fig:kalman_distribution_diagram} illustrates our Bayesian perspective over value parameters and noisy observed returns. Our proposed method has the following properties: It forms a trust region over the value parameters, based on their uncertainty covariance; It is aimed at tracking the solution rather than converging to it; It incrementally updates the parameters and the error covariance matrix, hence avoids sampling the parameters as is often required in Bayesian methods; It adjusts suitable learning rate to each individual parameter through the Kalman gain, thus the learning procedure does not depend on the parameterization of the value. 

Our main contributions are: (1) Developing a new regularized objective function for approximating values in the policy evaluation task. The regularization term accounts for both parameters and observations uncertainties. (2) Presenting a novel optimization algorithm, {\bf K}alman {\bf O}ptimization for {\bf V}alue {\bf A}pproximation (KOVA), and prove that it minimizes at each time step the regularized objective function. This optimizer can be easily plugged into any policy optimization algorithm and improve it. (3) Beyond RL context, we present the connection between EKF and the incremental Gauss-Newton method, the on-line natural gradient and the Kullback Leibler (KL) divergence, and explain how our objective function forms a trust region over the value parameters. (4) Demonstrating the improvement achieved by our optimizer on several control tasks with large state and action spaces. 

 \begin{figure}[t]
 	\centering{
 		\includegraphics[width=1.0\linewidth,height=0.5\textheight,keepaspectratio]{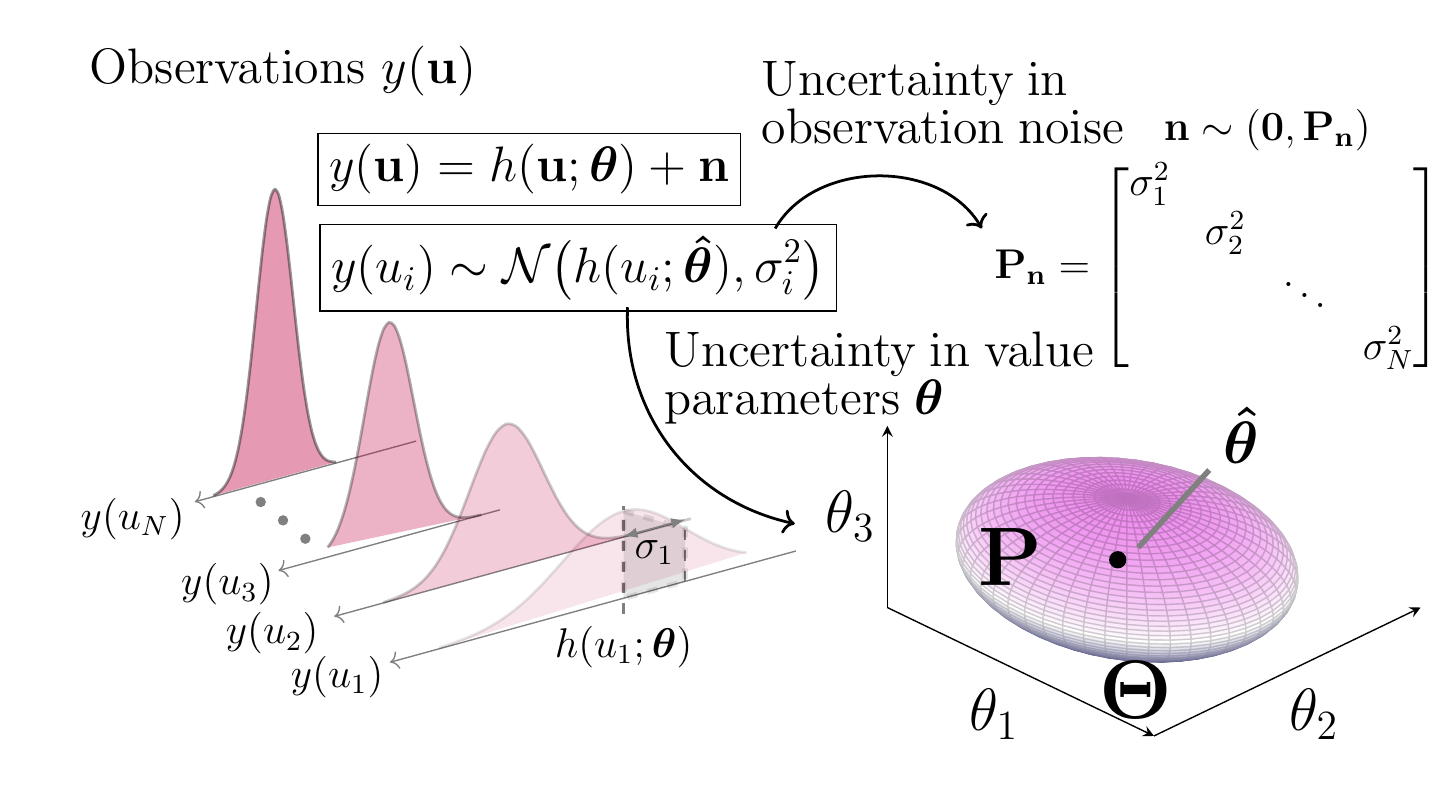}}
 	\caption{Illustration of our proposed model: a Bayesian perspective for the policy evaluation problem in RL. The noisy observation $y(u)$ for an input $u$ (for example $u$ is a state or a state-action pair and $y(u)$ is the a sum of discounted n-step rewards from this state) is decomposed into its mean, the value $h(u; \boldsymbol{\theta})$ and a random zero-mean noise $n$. The randomness of $y(u)$ originates from two sources: (i) the random noise $n$ which relates to the stochasticity of the transitions in the trajectory and to the possibly random policy. (ii) the randomness of $h$ through its dependency on the random parameters $\boldsymbol{\theta}$. In the context of RL, this randomness can be related to uncertainty regarding the MDP model that generated the noisy observations. }
 	\label{fig:kalman_distribution_diagram}
 \end{figure}

\section{Background}
\subsection{Reinforcement Learning and MDPs}
\label{Reinforcement learning}
The standard RL setting considers an interaction of an agent with an environment $\mathcal{E}$ for a discrete number of time steps. The environment is modeled as a Markov Decision Process (MDP) $\{\mathcal{S}, \mathcal{A}, P, R, \gamma\}$ where $\mathcal{S}$ is a finite set of states, $\mathcal{A}$ is a finite set of actions, $P: \mathcal{S} \times \mathcal{A} \times \mathcal{S}\rightarrow [0,1]$ is the state transition probabilities for each state $s$ and action $a$, $R: \mathcal{S} \times \mathcal{A} \rightarrow \mathbb{R}$ is a deterministic and bounded reward function and $\gamma$ is a discount factor. At each time step $t$, the agent observes state $s_t \in \mathcal{S}$ and chooses action $a_t \in \mathcal{A}$ according to a policy $\pi: \mathcal{S} \times \mathcal{A} \rightarrow [0,1]$.  The agent receives an immediate reward $r_t(s_t, a_t)$ and the environment stochastically steps to state $s_{t+1} \in \mathcal{S}$ according to the probability distribution  $P(s_{t+1}|s_t, a_t)$. The state value function and the state-action Q-function are used for evaluating the performance of a fixed policy $\pi$ \cite{sutton1998reinforcement}: $V^{\pi}(s)  = \mathbb{E}^{\pi}\big[ \sum_{t=0}^{\infty} \gamma^t r(s_t, a_t) | s_0 = s \big]$ and $Q^{\pi}(s, a) = \mathbb{E}^{\pi}\big[ \sum_{t=0}^{\infty} \gamma^t r_t(s_t, a_t)  | s_0 = s, a_0 = a \big]$, where $\mathbb{E}^{\pi}$ denotes the expectation with respect to the state (state-action) distribution induced by transition law $P$ and policy $\pi$.

\subsection{Value Function Estimation}
Policy evaluation, or value estimation, is a core element in RL algorithms. We will use the term {\it value function} (VF) to address the following functions: the state value function $V^{\pi}(s)$, the state-action Q-function $Q^{\pi}(s, a)
$ and the advantage function $A^{\pi}(s,a) = Q^{\pi}(s, a) - V^{\pi}(s)$. When the state or action space is large, a common approach is to approximate the VF using a parameterized function, $h(\cdot; \boldsymbol{\theta})$. We focus on general, possibly non-linear approximation functions such as DNNs that can learn effectively complex approximations.
    
\begin{table*}[t]
	\small
	\centering
	\caption{Different examples for policy optimization algorithms and their Bellman TD error  $\delta(u; \boldsymbol{\theta}_t)$ type. The decomposition of $\delta(u; \boldsymbol{\theta}_t)$ into the observation function $h(u; \boldsymbol{\theta}_t)$ and the target label $y(u)$ in the EKF model (\ref{eq:Extended-Kalman}) enables the integration of our KOVA optimizer with any policy optimization algorithm. $\boldsymbol{\theta}'$ refers to the previous network or to a target network, different than the one being trained $\boldsymbol{\theta}_t$.}
	\begin{tabular}{|l|p{40mm}|p{30	mm}|p{13mm}|p{50mm}|}
		\hline
		{\it Algorithm type} & {\it Example} & $\delta(u; \boldsymbol{\theta}_t)$ {\it type} & {\bf $h(u; \boldsymbol{\theta}_t)$ }& $y(u)$ \\ \hline \hline
		Actor-critic & A3C  \cite{mnih2016asynchronous}   &  $k$-step V-evaluation & $V(s_m; \boldsymbol{\theta}_t)$  & $\sum_{i=0}^{k-1}\gamma^i r_{m+i} + \gamma^k  V(s_{m + k}; \boldsymbol{\theta}')$ \\ \hline
		Actor-critic &  DDPG \cite{lillicrap2015continuous} & $1$-step Q-evaluation & $Q(s, a; \boldsymbol{\theta}_t)$ & $r + \gamma Q(s', \pi(s'); \boldsymbol{\theta}')$ \\ \hline
		Policy gradient & PPO \cite{schulman2017proximal} \newline TRPO \cite{schulman2015trust} & GAE \cite{schulman2015high} & $V(s_m; \boldsymbol{\theta}_t)$ & 
		$\sum_{i=0}^{\infty}(\gamma \lambda)^i \big(r_{m+i} + \gamma  V(s_{m + i + 1}; \boldsymbol{\theta}') $ \newline $ - V(s_{m + i}; \boldsymbol{\theta}') \big)  + V(s_{m}; \boldsymbol{\theta}')$ \\ \hline
		$\epsilon$-greedy & DQN \cite{mnih2013playing} & Optimality equation & $Q(s, a; \boldsymbol{\theta}_t)$ & $r + \gamma \max_{a'} Q(s', a'; \boldsymbol{\theta}')$ \\ \hline
	\end{tabular}
	\label{vf_table}
\end{table*}

A common approach for optimizing the VF parameters is to minimize at each time step $t$ the empirical mean of the squared {\it Bellman TD error}  $\delta (u; \boldsymbol{\theta}_t) \triangleq y (u) - h(u; \boldsymbol{\theta}_t)$, over a batch of $N$ samples generated form the environment $\mathcal{E}$ under a given policy:
\begin{equation}
\label{eq:vf objective}
L_t^{\text{MLE}}(\boldsymbol{\theta}_t) = \frac{1}{2N} \sum_{i=1}^{N} \delta^2 (u_i; \boldsymbol{\theta}_t).
\end{equation} 
We use the general notation $u$  to specify the {\it input} for the {\it target label} $y(u)$ and for the approximated value at time $t$, $h(u; \boldsymbol{\theta}_t)$. For example, for $h(u; \boldsymbol{\theta}_t) = V(s_m; \boldsymbol{\theta}_t)$, $u=s_m$ is the state at a discrete time $m$; For $h(u; \boldsymbol{\theta}_t) = Q(s_m, a_m; \boldsymbol{\theta}_t)$, $u=(s_m, a_m)$ is the state-action pair. In Table \ref{vf_table} we provide examples of several options for $y(u)$ and $h(u; \boldsymbol{\theta}_t)$ which clarify how this general notation can be utilized in known policy optimization algorithms. 

Traditionally, the VF is trained by stochastic gradient descent methods, estimating the loss on each experience as it is encountered, yielding the update:\\
$\boldsymbol{\theta}_{t+1}  \leftarrow  \boldsymbol{\theta}_t + \alpha \mathbb{E}_{u \sim p(\cdot)} \big[ \big(y(u) - h(u; \boldsymbol{\theta}_t)\big) \nabla_{\boldsymbol{\theta}_t} h(u; \boldsymbol{\theta}_t) \big]$, where $\alpha$ is the learning rate and $p(\cdot)$ is the experience distribution. Typically, the training procedure seeks for a point estimate of the model parameters. We will show (Section \ref{Section:EKF_RL}) that the underlying assumption on $L_t^{\text{MLE}}$ (\ref{eq:vf objective}) is that the parameters $\boldsymbol{\theta}_t$ are deterministic and that the target labels $y(u)$ are independent Gaussian RVs with mean $h(u; \boldsymbol{\theta_t})$ and a fixed variance. In Section \ref{Section:EKF} we present the EKF approach which generalizes the process of generating observations and adds flexibility to the model assumptions: the parameters may be viewed as RVs and the variance of the target label may change between observations. 

\subsection{Extended Kalman Filter (EKF)}
\label{Section:EKF}
In this section we briefly outline the Extended Kalman filter \cite{anderson1979optimal, gelb1974applied}. The EKF is a standard technique for estimating the state of a nonlinear dynamic system or for learning the parameters of a nonlinear approximation function. In this paper we will focus on its latter role, meaning estimating $\boldsymbol{\theta}$. The EKF considers the following model:
\begin{equation}
\label{eq:Extended-Kalman}
\begin{cases}
\boldsymbol{\theta}_t = \boldsymbol{\theta}_{t-1} + {\bf v}_t\\
y({\bf u}_t) = h({\bf u}_t; \boldsymbol{\theta}_t) + {\bf n}_t
\end{cases},
\end{equation}
where $\boldsymbol{\theta}_t \in \mathbb{R}^{d \times 1}$ are the parameters evaluated at time $t$, $y ({\bf u}_t)$ is the $N$-dimensional observations vector at time $t$:
\begin{equation}
\label{eq:label_vector}
y ({\bf u}_t) = [ 
y(u_t^1),  y(u_t^2), \ldots, y(u_t^N)  ]^\top \in \mathbb{R}^{N \times 1},
\end{equation}
and $h({\bf u}_t; \boldsymbol{\theta}_t) \in \mathbb{R}^{N \times 1}$ is an $N$-dimensional vector, where $h(u; \boldsymbol{\theta})$ is a nonlinear observation function with input $u$ and parameters $\boldsymbol{\theta}$:
\begin{equation}
\label{eq:observation_func_vector}
h({\bf u}_t; \boldsymbol{\theta}_t) =[
h(u_t^1; \boldsymbol{\theta}_t), 
h(u_t^2; \boldsymbol{\theta}_t), 
\ldots,
h(u_t^N; \boldsymbol{\theta}_t)  
]^\top.
\end{equation} 
${\bf v}_t$ is the evolution noise, ${\bf n}_t$ is the observation noise, both modeled as additive and white noises with covariances ${\bf P}_{{\bf v}_t}$ and ${\bf P}_{{\bf n}_t}$,  respectively. As seen in the model presented in Equation (\ref{eq:Extended-Kalman}), EKF treats the parameters $\boldsymbol{\theta}_t$ as RVs, similarly to Bayesian approaches. According to this perspective, the parameters belong to an uncertainty set $\Theta$ governed by the mean and covariance of the parameters distribution. 

The estimation at time t, denoted as $\boldsymbol{\hat{\theta}}_{t|\cdot}$ is the conditional expectation of the parameters with respect to the observed data. The EKF formulation distinguishes between estimates that are based on observations up to time $t$, $\boldsymbol{\hat{\theta}}_{t|t}  \triangleq \mathbb{E} [ \boldsymbol{\theta}_t | y_{1:t}]$, and observations up to time $t-1$, $\boldsymbol{\hat{\theta}}_{t|t-1}  \triangleq \mathbb{E} [ \boldsymbol{\theta}_t | y_{1:t-1}]  = \boldsymbol{\hat{\theta}}_{t-1|t-1}$. With some abuse of notation, $y_{1:t'}$ are the observations gathered up to time $t'$: $y ({\bf u}_1), \ldots, y ({\bf u}_{t'})$. The {\it parameters errors} are defined by: $\boldsymbol{\tilde{\theta}}_{t|t}  \triangleq \boldsymbol{\theta}_t - \boldsymbol{\hat{\theta}}_{t|t}$ and $\boldsymbol{\tilde{\theta}}_{t|t-1} \triangleq \boldsymbol{\theta}_t - \boldsymbol{\hat{\theta}}_{t|t-1}$. The conditional {\it error covariances} are given by: ${\bf P}_{t|t}  \triangleq \mathbb{E} \big[  \boldsymbol{\tilde{\theta}}_{t|t}  \boldsymbol{\tilde{\theta}}_{t|t}^\top | y_{1:t}\big], \ 
{\bf P}_{t|t-1}  \triangleq \mathbb{E} \big[  \boldsymbol{\tilde{\theta}}_{t|t-1}  \boldsymbol{\tilde{\theta}}_{t|t-1}^\top | y_{1:t-1}\big] \\
 = {\bf P}_{t-1|t-1} + {\bf P}_{{\bf v}_t}$.

EKF considers several statistics of interest at each time step:
{\it The prediction of the observation function}, {\it the observation innovation}, {\it the covariance between the parameters error and the innovation}, {\it the covariance of the innovation}  and the {\it Kalman gain} are defined respectively in Equations (\ref{eq:The prediction of the observation}) - (\ref{eq:Kal_gain}):
\begin{align}
\label{eq:The prediction of the observation} & {\bf \hat{y}}_{t|t-1} \triangleq  \mathbb{E}[h({\bf u}_t; \boldsymbol{\theta}_t)|y_{1:t-1}],\\
\label{eq:The observation innovation} & {\bf \tilde{y}}_{t|t-1} \triangleq h({\bf u}_t; \boldsymbol{\theta}_t) - {\bf \hat{y}}_{t|t-1}, \\
\label{eq:The covariance between the weights error and the innovation} & {\bf P}_{\boldsymbol{\tilde{\theta}}_t,{\bf \tilde{y}}_t}  \triangleq \mathbb{E}[ \boldsymbol{\tilde{\theta}}_{t|t-1} {\bf \tilde{y}}_{t|t-1} |y_{1:t-1}],\\	
\label{eq:The covariance of the innovation} & {\bf P}_{{\bf \tilde{y}}_t}  \triangleq \mathbb{E}[ {\bf \tilde{y}}_{t|t-1} {\bf \tilde{y}}_{t|t-1}^\top |y_{1:t-1}] + {\bf P}_{{\bf n}_t},\\
\label{eq:Kal_gain} & {\bf K}_t \triangleq {\bf P}_{\boldsymbol{\tilde{\theta}}_t,{\bf \tilde{y}}_t} {\bf P}_{{\bf \tilde{y}}_t}^{-1}.
\end{align}
The above statistics serve for the EKF updates:
\begin{equation}
\label{eq:EKF}
\begin{cases}
\boldsymbol{\hat{\theta}}_{t|t}^{\text{EKF}} = \boldsymbol{\hat{\theta}}_{t|t-1} +  {\bf K}_t \big( y({\bf u}_t) - h({\bf u}_t; \boldsymbol{\hat{\theta}}_{t|t-1}) \big),\\
{\bf P}_{t|t} = {\bf P}_{t|t-1} - {\bf K}_t {\bf P}_{{\bf \tilde{y}}_t} {\bf K}_t^\top.
\end{cases}
\end{equation}
In the next section we present how to use the EKF formulation in order to approximate VFs which consider uncertainty both in the parameters and in the noisy observations.
\section{EKF for Value Function Approximation}
\label{Section:EKF_RL}
We now derive a novel regularized objective function and argue in its favor for optimizing value functions in RL. We use general notations in order to enable integration of our proposed VF optimization method with any policy optimization algorithm. The main idea is to decompose the Bellman TD error vector $\delta({\bf u}_t; \boldsymbol{\theta}_t)$ into two parts: \\
$\delta({\bf u}_t; \boldsymbol{\theta}_t)  = y({\bf u}_t) - h({\bf u}_t; \boldsymbol{\theta}_t) = [\delta(u_t^1; \boldsymbol{\theta}_t), .., \delta(u_t^N; \boldsymbol{\theta}_t)]^\top$. (i) The observation at time $t$, $y({\bf u}_t)$ is a vector that contains $N$ target labels $y (u_t^1), \ldots, y (u_t^N)$. (ii) The observation function may be one of the following:
\begin{equation*}
h(u; \boldsymbol{\theta}_t) = 
\begin{cases}
V(s; \boldsymbol{\theta}_t) & \text{the state value function}\\
Q(s, a; \boldsymbol{\theta}_t) & \text{the state-action Q-function}\\
A(s, a; \boldsymbol{\theta}_t) & \text{the advantage function.}
\end{cases}
\end{equation*}
The observation functions for $N$ inputs are concatenated into the $N$-dimensional vector $h({\bf u}_t; \boldsymbol{\theta}_t)$, as presented in Equation (\ref{eq:observation_func_vector}).  In Table \ref{vf_table} we provide several examples for the Bellman TD error decomposition according to the chosen policy optimization algorithm.

Our goal is to estimate the parameters $\boldsymbol{\theta}_t$. One way is to learn them by maximum likelihood estimation (MLE) using stochastic gradient descent methods: $\boldsymbol{\theta}^{\text{MLE}} = \arg\max_{\boldsymbol{\theta}} \log p(y_{1:t}|\boldsymbol{\theta})$. This forms the objective function in Equation (\ref{eq:vf objective}). Another way is learning them by a Bayesian approach which uses Bayes rule and adds prior knowledge over the parameters $p(\boldsymbol{\theta})$ to calculate the maximum a-posteriori (MAP) estimator: $\boldsymbol{\theta}^{\text{MAP}}  = \arg\max_{\boldsymbol{\theta}} \log p(\boldsymbol{\theta}|y_{1:t}) 
 = \arg\max_{\boldsymbol{\theta}} \log p(y_{1:t}|\boldsymbol{\theta}) + \log p(\boldsymbol{\theta})$. Given the observations gathered up to time $t$, we can re-write the MAP estimator:
\begin{align}
\label{eq:MAP_EKF}
\boldsymbol{\theta}_t^{\text{MAP}} = \arg\max_{\boldsymbol{\theta}_t} \log p(y_t|\boldsymbol{\theta}_t) + \log  p(\boldsymbol{\theta}_t|y_{1:t-1}).
\end{align}
Here, instead of using the parameters prior, we use an equivalent derivation for the parameters posterior conditioned on $y_{1:t}$, based on the likelihood of a {\it single} observation $y_t \triangleq y({\bf u}_t)$ and the posterior conditioned on $y_{1:t-1}$ \cite{van2004sigma}. This unique derivation is a key step for making the incremental Kalman updates and for defining the objective function in Equation (\ref{eq:EKF-loss}). In order do define the likelihood  $p(y_{t}|\boldsymbol{\theta}_t)$ and the posterior $p(\boldsymbol{\theta}_t|y_{1:t-1})$, we adopt the EKF model (\ref{eq:Extended-Kalman}), and make the following assumptions: 

\begin{assumption}
	\label{As:ConditionalIndependance2}
	The likelihood $p(y({\bf u}_t)|\boldsymbol{\theta}_t)$ is assumed to be Gaussian: 
	$y({\bf u}_t)|\boldsymbol{\theta}_t \sim \mathcal{N}( h({\bf u}_t; \boldsymbol{\theta}_t), {\bf P}_{{\bf n}_t})$ 
\end{assumption}
\begin{assumption}
	\label{As:GaussianPosterior2}
	The posterior distribution $p(\boldsymbol{\theta}_t|y_{1:t-1})$ is assumed to be Gaussian: $\boldsymbol{\theta}_t|y_{1:t-1} \sim \mathcal{N}(\boldsymbol{\hat{\theta}}_{t|t-1},{\bf P}_{t|t-1})$.
\end{assumption}
These assumptions are common when using the EKF. In the context of RL, these assumptions add the flexibility we want: the value is treated as a RV and information is gathered on the uncertainty of its estimate. In addition, the noisy observations (the target labels), can have different variances and can even be correlated. Based on these Gaussian assumptions, we can derive the following Theorem:
\begin{theorem}
	\label{theorem1}
	Under Assumptions \ref{As:ConditionalIndependance2} and \ref{As:GaussianPosterior2}, $\boldsymbol{\hat{\theta}}^{\text{EKF}}_{t|t}$  (\ref{eq:EKF}) minimizes at each $t$ the following regularized objective function:
	\begin{align}
	\label{eq:EKF-loss}
	\nonumber L^{\text{EKF}}_t(\boldsymbol{\theta}_t) & =  \frac{1}{2}  \delta({\bf u}_t; \boldsymbol{\theta}_t)^\top {\bf P}_{{\bf n}_t}^{-1}  \delta({\bf u}_t; \boldsymbol{\theta}_t) \\
	& +  \frac{1}{2}(\boldsymbol{\theta}_t - \boldsymbol{\hat{\theta}}_{t|t-1})^\top {\bf P}_{t|t-1}^{-1} (\boldsymbol{\theta}_t - \boldsymbol{\hat{\theta}}_{t|t-1}),
	\end{align}
	where $\boldsymbol{\hat{\theta}}^{\text{EKF}}_{t|t} \in \arg\min_{\boldsymbol{\theta}_t} L^{\text{EKF}}_t(\boldsymbol{\theta}_t)$. 
\end{theorem}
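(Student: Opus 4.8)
The plan is to recognize $L^{\text{EKF}}_t$ as (twice) the negative log-posterior appearing in the MAP formulation (\ref{eq:MAP_EKF}), so that its minimizer is exactly the MAP estimator, and then to show that the single EKF update (\ref{eq:EKF}) realizes this minimizer under the standard first-order linearization of $h$. First I would substitute the two Gaussian assumptions into (\ref{eq:MAP_EKF}). Assumption \ref{As:ConditionalIndependance2} gives $\log p(y_t|\boldsymbol{\theta}_t) = -\tfrac{1}{2}\delta({\bf u}_t;\boldsymbol{\theta}_t)^\top {\bf P}_{{\bf n}_t}^{-1}\delta({\bf u}_t;\boldsymbol{\theta}_t) + c_1$, since $\delta = y - h$ and the normalizing constant does not depend on $\boldsymbol{\theta}_t$; Assumption \ref{As:GaussianPosterior2} gives $\log p(\boldsymbol{\theta}_t|y_{1:t-1}) = -\tfrac{1}{2}(\boldsymbol{\theta}_t - \boldsymbol{\hat{\theta}}_{t|t-1})^\top {\bf P}_{t|t-1}^{-1}(\boldsymbol{\theta}_t - \boldsymbol{\hat{\theta}}_{t|t-1}) + c_2$. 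Summing and flipping the sign shows that the $\arg\max$ of the log-posterior coincides with $\arg\min_{\boldsymbol{\theta}_t} L^{\text{EKF}}_t(\boldsymbol{\theta}_t)$, establishing that any minimizer is the MAP estimate.

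It then remains to verify that the EKF output $\boldsymbol{\hat{\theta}}^{\text{EKF}}_{t|t}$ is that minimizer. I would compute $\nabla_{\boldsymbol{\theta}_t} L^{\text{EKF}}_t = -{\bf H}_t^\top {\bf P}_{{\bf n}_t}^{-1}\delta({\bf u}_t;\boldsymbol{\theta}_t) + {\bf P}_{t|t-1}^{-1}(\boldsymbol{\theta}_t - \boldsymbol{\hat{\theta}}_{t|t-1})$, where ${\bf H}_t \triangleq \nabla_{\boldsymbol{\theta}} h({\bf u}_t;\boldsymbol{\theta})$ is the Jacobian of the observation vector. Following the EKF philosophy, I replace $h$ by its first-order Taylor expansion about $\boldsymbol{\hat{\theta}}_{t|t-1}$, namely $h({\bf u}_t;\boldsymbol{\theta}_t)\approx h({\bf u}_t;\boldsymbol{\hat{\theta}}_{t|t-1}) + {\bf H}_t(\boldsymbol{\theta}_t - \boldsymbol{\hat{\theta}}_{t|t-1})$ with ${\bf H}_t$ evaluated at $\boldsymbol{\hat{\theta}}_{t|t-1}$. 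The objective becomes quadratic, and setting the gradient to zero yields the normal equation $\big({\bf P}_{t|t-1}^{-1} + {\bf H}_t^\top {\bf P}_{{\bf n}_t}^{-1}{\bf H}_t\big)(\boldsymbol{\theta}_t - \boldsymbol{\hat{\theta}}_{t|t-1}) = {\bf H}_t^\top {\bf P}_{{\bf n}_t}^{-1}\big(y({\bf u}_t) - h({\bf u}_t;\boldsymbol{\hat{\theta}}_{t|t-1})\big)$, whose unique solution (the Hessian is positive definite since both covariances are) is the information-form update.

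The final and most delicate step is to reconcile this information-form solution with the gain-form update (\ref{eq:EKF}). Under the same linearization, the statistics (\ref{eq:The covariance between the weights error and the innovation})--(\ref{eq:The covariance of the innovation}) evaluate to ${\bf P}_{\boldsymbol{\tilde{\theta}}_t,{\bf \tilde{y}}_t} = {\bf P}_{t|t-1}{\bf H}_t^\top$ and ${\bf P}_{{\bf \tilde{y}}_t} = {\bf H}_t{\bf P}_{t|t-1}{\bf H}_t^\top + {\bf P}_{{\bf n}_t}$, so that ${\bf K}_t = {\bf P}_{t|t-1}{\bf H}_t^\top\big({\bf H}_t{\bf P}_{t|t-1}{\bf H}_t^\top + {\bf P}_{{\bf n}_t}\big)^{-1}$. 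I would then invoke the Sherman--Morrison--Woodbury identity to show $\big({\bf P}_{t|t-1}^{-1} + {\bf H}_t^\top {\bf P}_{{\bf n}_t}^{-1}{\bf H}_t\big)^{-1}{\bf H}_t^\top {\bf P}_{{\bf n}_t}^{-1} = {\bf K}_t$, which converts the normal-equation solution exactly into $\boldsymbol{\hat{\theta}}_{t|t-1} + {\bf K}_t(y({\bf u}_t) - h({\bf u}_t;\boldsymbol{\hat{\theta}}_{t|t-1}))$, i.e.\ (\ref{eq:EKF}). I expect this matrix-identity bookkeeping, together with the observation that strict convexity of the linearized objective makes the stationarity condition both necessary and sufficient for a global minimum, to be the main obstacle; the probabilistic reduction in the first paragraph is routine.
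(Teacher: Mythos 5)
Your proposal is correct and follows essentially the same route as the paper's own proof: reduce the MAP problem (\ref{eq:MAP_EKF}) to $L^{\text{EKF}}_t$ via the two Gaussian assumptions, linearize $h$ about $\boldsymbol{\hat{\theta}}_{t|t-1}$, set the gradient of the resulting quadratic to zero, and convert the information-form solution $\big({\bf P}_{t|t-1}^{-1} + {\bf H}_t^\top {\bf P}_{{\bf n}_t}^{-1}{\bf H}_t\big)^{-1}{\bf H}_t^\top {\bf P}_{{\bf n}_t}^{-1}$ into the gain form ${\bf K}_t$. The only cosmetic difference is that you invoke Sherman--Morrison--Woodbury by name where the paper verifies the same identity by a direct push-through manipulation (multiplying through by ${\bf H}_t{\bf P}_{t|t-1}{\bf H}_t^\top + {\bf P}_{{\bf n}_t}$ and its inverse), and you add the explicit, correct remark that positive definiteness of the two covariances makes the linearized objective strictly convex, so stationarity suffices.
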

The proof for Theorem \ref{theorem1} appears in the supplementary material. It is based on solving the maximization problem in Equation (\ref{eq:MAP_EKF}) using the EKF model (\ref{eq:Extended-Kalman}) and the Gaussian Assumptions \ref{As:ConditionalIndependance2} and \ref{As:GaussianPosterior2}.

We now explicitly write the expressions for the statistics of interest in Equations (\ref{eq:The prediction of the observation}) - (\ref{eq:Kal_gain})  (see the supplementary material for more detailed derivations). The derivations are based on the first order Taylor series linearization for the observation function $h(\boldsymbol{\theta}_t)$:
$h({\bf u}_t; \boldsymbol{\theta}_t) 
= h({\bf u}_t; \boldsymbol{\hat{\theta}}) +   \nabla_{\boldsymbol{\theta}_t} h({\bf u}_t; \boldsymbol{\hat{\theta}})^\top \big( \boldsymbol{\theta}_{t} - \boldsymbol{\hat{\theta}} \big)$, where 
\begin{align}
\label{eq:nabla_h}
& \nabla_{\boldsymbol{\theta}_t} h({\bf u}_t; \boldsymbol{\hat{\theta}}) \\
\nonumber & = \begin{bmatrix} 
\nabla_{\boldsymbol{\theta}_t} h(u_t^1; \boldsymbol{\hat{\theta}}) , \nabla_{\boldsymbol{\theta}_t} h(u_t^2; \boldsymbol{\hat{\theta}}), \ldots , \nabla_{\boldsymbol{\theta}_t} h(u_t^N; \boldsymbol{\hat{\theta}})  \end{bmatrix} \in \mathbb{R}^{d \times N}
\end{align}
and $\boldsymbol{\hat{\theta}}$ is typically chosen to be the previous estimation of the parameters at time $t-1$,  $\boldsymbol{\hat{\theta}}=\boldsymbol{\hat{\theta}}_{t|t-1}$. {\it The prediction of the observation function} is ${\bf \hat{y}}_{t|t-1} = h({\bf u}_t; \boldsymbol{\hat{\theta}})$, {\it the covariance between the parameters error and the innovation} is ${\bf P}_{\boldsymbol{\tilde{\theta}}_t,{\bf \tilde{y}}_t}  = {\bf P}_{t|t-1} \nabla_{\boldsymbol{\theta}_t} h({\bf u}_t,  \boldsymbol{\hat{\theta}})$ and {\it the covariance of the innovation} is:
\begin{align}
{\bf P}_{{\bf \tilde{y}_t}}   = \nabla_{\boldsymbol{\theta}_t} h({\bf u}_t; \boldsymbol{\hat{\theta}})^\top {\bf P}_{t|t-1} \nabla_{\boldsymbol{\theta}_t} h({\bf u}_t; \boldsymbol{\hat{\theta}})  + {\bf P}_{{\bf n}_t}.
\label{eq:statistics}
\end{align}
The Kalman gain then becomes:
\begin{align}
\label{eq:Kalman_gain}
\nonumber {\bf K}_t & = {\bf P}_{t|t-1} \nabla_{\boldsymbol{\theta}_t} h({\bf u}_t,  \boldsymbol{\hat{\theta}}) \\
& \big( \nabla_{\boldsymbol{\theta}_t} h({\bf u}_t; \boldsymbol{\hat{\theta}})^\top {\bf P}_{t|t-1} \nabla_{\boldsymbol{\theta}_t} h({\bf u}_t; \boldsymbol{\hat{\theta}})  + {\bf P}_{{\bf n}_t} \big)^{-1}.
\end{align}
This Kalman gain is used in the parameters update and the error covariance update in Equation (\ref{eq:EKF}).

\subsection{Comparing between $L_t^{\text{EKF}}$ and $L_t^{\text{MLE}}$ for Optimizing Value Functions}
\label{Section:comparison}
We argue in favor of using the regularized objective function $L^{\text{EKF}}_t(\boldsymbol{\theta}_t)$ (\ref{eq:EKF-loss}) for optimizing VFs instead of the commonly used objective function $L^{\text{MLE}}_t(\boldsymbol{\theta}_t)$ (\ref{eq:vf objective}). Corollary \ref{corollary1} will assist us to discuss and compare between the two objective functions:
\begin{corollary}
	\label{corollary1}
	Under Assumptions \ref{As:ConditionalIndependance2} and \ref{As:GaussianPosterior2}, consider a diagonal covariance ${\bf P}_{{\bf n}_t}$ with diagonal elements $\sigma_i = N$  and assume ${\bf P}_{0|0} = {\bf P}_{{\bf v}_t} = {\bf 0}$, then:
	$L^{\text{EKF}}_t(\boldsymbol{\theta}_t) = L^{\text{MLE}}_t(\boldsymbol{\theta}_t)$.
\end{corollary}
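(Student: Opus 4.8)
The plan is to split (\ref{eq:EKF-loss}) into its data-fit term and its regularization term and handle each under the stated hypotheses. First I would dispose of the data-fit term. Since ${\bf P}_{{\bf n}_t}$ is assumed diagonal with every diagonal entry equal to $N$, it equals $N{\bf I}_N$, so ${\bf P}_{{\bf n}_t}^{-1} = \tfrac{1}{N}{\bf I}_N$. Substituting into the first term of (\ref{eq:EKF-loss}) gives
\[
\tfrac{1}{2}\,\delta({\bf u}_t;\boldsymbol{\theta}_t)^\top {\bf P}_{{\bf n}_t}^{-1}\,\delta({\bf u}_t;\boldsymbol{\theta}_t) = \tfrac{1}{2N}\sum_{i=1}^{N}\delta^2(u_t^i;\boldsymbol{\theta}_t),
\]
which is exactly $L^{\text{MLE}}_t(\boldsymbol{\theta}_t)$ as defined in (\ref{eq:vf objective}). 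Hence it only remains to show that the regularization term vanishes.

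Second, I would argue by induction on $t$ that the prior covariance is identically zero, ${\bf P}_{t|t-1} = {\bf 0}$ for all $t$. The base case follows from ${\bf P}_{1|0} = {\bf P}_{0|0} + {\bf P}_{{\bf v}_1} = {\bf 0}$ using the two assumptions. For the inductive step, note that ${\bf P}_{t|t-1} = {\bf 0}$ forces the Kalman gain to vanish: by (\ref{eq:Kalman_gain}), ${\bf K}_t = {\bf P}_{t|t-1}\,\nabla_{\boldsymbol{\theta}_t}h({\bf u}_t;\boldsymbol{\hat{\theta}})\,{\bf P}_{{\bf \tilde{y}}_t}^{-1} = {\bf 0}$ (the inner inverse stays well-defined because ${\bf P}_{{\bf \tilde{y}}_t} = {\bf P}_{{\bf n}_t} = N{\bf I}_N$). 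Then the covariance update in (\ref{eq:EKF}) gives ${\bf P}_{t|t} = {\bf P}_{t|t-1} - {\bf K}_t {\bf P}_{{\bf \tilde{y}}_t}{\bf K}_t^\top = {\bf 0}$, and ${\bf P}_{t+1|t} = {\bf P}_{t|t} + {\bf P}_{{\bf v}_{t+1}} = {\bf 0}$, closing the induction.

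Third, with ${\bf P}_{t|t-1} = {\bf 0}$ the regularization term reads $\tfrac{1}{2}(\boldsymbol{\theta}_t - \boldsymbol{\hat{\theta}}_{t|t-1})^\top {\bf P}_{t|t-1}^{-1}(\boldsymbol{\theta}_t - \boldsymbol{\hat{\theta}}_{t|t-1})$. I would interpret ${\bf P}_{t|t-1}^{-1}$ as the Moore--Penrose pseudo-inverse, using the convention ${\bf 0}^{+} = {\bf 0}$, so that this term is identically zero for every $\boldsymbol{\theta}_t$. Combining the three steps yields $L^{\text{EKF}}_t(\boldsymbol{\theta}_t) = L^{\text{MLE}}_t(\boldsymbol{\theta}_t)$ as functions of $\boldsymbol{\theta}_t$, which is the claim.

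The delicate point, and the one I would make explicit, is precisely this last step: ${\bf P}_{t|t-1}^{-1}$ is literally undefined at ${\bf P}_{t|t-1} = {\bf 0}$, and taking it as a genuine inverse would make the penalty blow up rather than vanish. The statement therefore only holds under the convention that a degenerate prior contributes no precision, i.e. that the zero (pseudo-)precision matrix removes the regularizer. This is exactly the ``deterministic parameters, fixed-variance labels'' regime that the paper identifies as the modeling assumption implicit in $L^{\text{MLE}}_t$, so the reconciliation of this formal convention with the Bayesian reading of Assumptions \ref{As:ConditionalIndependance2} and \ref{As:GaussianPosterior2} is the main obstacle to state cleanly, rather than any nontrivial computation.
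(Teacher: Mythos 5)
Your proposal is correct and takes essentially the same route as the paper's own proof: the diagonal ${\bf P}_{{\bf n}_t} = N{\bf I}$ collapses the data-fit term to $L^{\text{MLE}}_t(\boldsymbol{\theta}_t)$, and ${\bf P}_{0|0} = {\bf P}_{{\bf v}_t} = {\bf 0}$ keeps the error covariance equal to ${\bf 0}$ for every $t$, so the regularization term drops out. You are in fact more careful than the paper, whose proof merely asserts that the initial error covariance ``does not change'' and concludes the equality; your explicit induction through the Kalman-gain update and, especially, the pseudo-inverse convention ${\bf 0}^{+} = {\bf 0}$ address the degenerate-inverse issue (that a literal inverse of ${\bf P}_{t|t-1}={\bf 0}$ would turn the regularizer into a hard constraint rather than make it vanish) which the paper's proof silently glosses over.
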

The proof is given in the supplementary material. According to Corollary \ref{corollary1}, the two objective functions are the same if we consider the parameters as deterministic and if we assume that the noisy target labels have a fixed variance. 

So what are the differences between the two objective functions? First, $L^{\text{EKF}}_t$ is a regularized version of $L^{\text{MLE}}_t$: the regularization is causing the parameters $\boldsymbol{\theta}_t$ to {\it track} the recent parameters estimate, $\boldsymbol{\hat{\theta}}_{t|t-1}$, stabilizing the estimate process. The error between the successive estimates is weighted with the inverse of the uncertainty information ${\bf P}_{t|t-1}$. $L^{\text{MLE}}_t$ does not include a regularization term, meaning it does not account for parametrization uncertainties. Note that when adding a standard $L_2$ regularization to $L^{\text{MLE}}_t$, often common in DNNs, it reflects staying close to the ${\bf 0}$ vector which is not always desired. 

Second, $L^{\text{EKF}}_t$ weights the squared Bellman TD error vector $\delta({\bf u}_t; \boldsymbol{\theta}_t)$ with ${\bf P}_{{\bf n}_t}^{-1}$ which can be interpreted as an additional regularization technique. ${\bf P}_{{\bf n}_t}$ can be viewed as the amount of confidence we have in the observations, as defined in the EKF model (\ref{eq:Extended-Kalman}): if the observations are noisy, we should consider larger values for the diagonal elements in the covariance ${\bf P}_{{\bf n}_t}$. In addition, $L^{\text{EKF}}_t$ allows us to model correlations between observations errors, unlike the iid assumption in $L^{\text{MLE}}_t$. In Section \ref{Section:experiments} we discuss possible options for ${\bf P}_{{\bf n}_t}$.  
 
Looking at the parameters update in Equation (\ref{eq:EKF}) and the definition of the Kalman gain ${\bf K}_t$ in Equation (\ref{eq:Kalman_gain}), we can see that the Kalman gain propagates the new information from the noisy target labels, back down into the parameters uncertainty set $\Theta$, before combining it with the estimated parameter value. Actually, ${\bf K}_t$ can be interpreted as an adaptive learning rate for each individual parameter that implicitly incorporates the uncertainty of each parameter. This approach resembles familiar stochastic gradient optimization methods such as Adagrad \cite{duchi2011adaptive}, AdaDelta \cite{zeiler2012adadelta}, RMSprop \cite{tieleman2012lecture}
and Adam \cite{kingma2014adam}, for different choices of ${\bf P}_{t|t-1}$ and ${\bf P}_{{\bf n}_t}$. We refer the reader to \citet{ruder2016overview}.

When looking at $L^{\text{EKF}}_t(\boldsymbol{\theta}_t)$ the reader may ask  what do $\boldsymbol{\hat{\theta}}_{t|t-1}$ and ${\bf P}_{t|t-1}$ stand for? When we are estimating the VF parameters for a fixed policy $\pi_{\text{old}}$, our objective function imposes a {\it trust region} method in each iteration of a batch optimization procedure. The trust region helps us to avoid over-fitting to the most recent batch of data. In this case $\boldsymbol{\hat{\theta}}_{t|t-1}^{\pi_{\text{old}}}$ is the last evaluation of the VF parameters for the same fixed policy $\pi_{\text{old}}$, and ${\bf P}_{t|t-1}^{\pi_{\text{old}}}$ is the conditional error covariance between the new parameters estimation and the previous one, again, for the same fixed policy\footnote{We added the upper-script $\pi_{\text{old}}$ to emphasis that the VF parameters correspond to evaluating the same policy.}. When we change policies and start to evaluate the VF parameters of $\pi_{\text{new}}$ we set $\boldsymbol{\theta}_{0|0}^{\pi_{\text{new}}} = \boldsymbol{\hat{\theta}}_{t|t}^{\pi_{\text{old}}}$ and ${\bf P}_{0|0}^{\pi_{\text{new}}} = {\bf P}_{t|t}^{\pi_{\text{old}}}$, meaning we start a new estimation procedure at $t=0$ for the new policy.

\subsection{Connection between EKF, Natural Gradient and the Gauss-Newton Method}
The EKF may be viewed as an on-line natural gradient algorithm \cite{amari1998natural} that uses the Fisher information matrix ${\bf J}_t$ \cite{ollivier2018online}. In this setting, the connection between the error covariance matrix and the Fisher matrix is given by: ${\bf P}_{t|t}^{-1} = (t+1) {\bf J}_t$. This insight suggests that the regularization term in $L_t^{\text{EKF}}$ is actually a second order approximation of the KL-divergence between the previous parameter estimate and the current one. Combining these insights together, we conclude that our proposed method can be viewed as a natural gradient algorithm for VF approximation. Similarly, the EKF may be viewed as an incremental version of the Gauss-Newton method, which is a common iterative method for solving least squares problems \cite{bertsekas1996incremental}. When updating the parameters, the Gauss-Newton uses the matrix $H = \mathbb{E}[J^T J]$ where $J$ is the Jacobian of $h(\boldsymbol{\theta}_t)$. When the observations are assumed to be Gaussian (as we assume in Assumption \ref{As:ConditionalIndependance2}), $H$ is equivalent to the Fisher information matrix. 

The following Theorem formalizes the connection between EKF and two separate KL divergences:
\begin{theorem}
	\label{theorem:KL}
	Assume the inputs $u$ are drawn independently from a training distribution $\hat{Q}_{u}$ with density function $q(u)$, and assume the corresponding observations $y$ are drawn from a conditional training distribution $\hat{Q}_{y|u}$ with density function $q(y|u)$. 
	Let $Q_{u,y}$ be the joint distribution whose density is $q(u,y) = q(y|u)q(u)$, and let $P_{u,y}(\boldsymbol{\theta})$ be the learned distribution, whose density is $p(u,y|\boldsymbol{\theta}) = p(y|u, \boldsymbol{\theta})q(u)$. Under Assumptions \ref{As:ConditionalIndependance2} and \ref{As:GaussianPosterior2}, consider a diagonal covariance ${\bf P}_{{\bf n}_t}$ with diagonal elements $\sigma_i = N$ then:
	\begin{align*}
	 L^{\text{EKF}}_t(\boldsymbol{\theta}_t) & = C +  N \mathbb{E}_{\hat{Q}_u} [D_{\text{KL}} \big(\hat{Q}_{y|u} || P_{y|u}(\boldsymbol{\theta}) \big)]\\
	& +  t \cdot D_{\text{KL}} \big(P_{u,y}(\boldsymbol{\theta} + \Delta \boldsymbol{\theta})|| P_{u,y}(\boldsymbol{\theta}) \big)  + \mathcal{O}(\| \Delta \boldsymbol{\theta}\|^3)
	\end{align*}
	where $\small C = \log \big(\frac{1}{(2\pi)^{N/2} |{\bf P}_{{\bf n}_t}|^{1/2} } \big)$. 
\end{theorem}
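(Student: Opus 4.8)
The plan is to split $L_t^{\text{EKF}}$ in Equation (\ref{eq:EKF-loss}) into its two summands and identify each with one of the two KL terms. The first summand, $\frac{1}{2}\delta(\mathbf{u}_t;\boldsymbol{\theta}_t)^\top \mathbf{P}_{\mathbf{n}_t}^{-1}\delta(\mathbf{u}_t;\boldsymbol{\theta}_t)$, will produce the data-fit KL term together with the constant $C$; the second summand, $\frac{1}{2}(\boldsymbol{\theta}_t - \boldsymbol{\hat{\theta}}_{t|t-1})^\top \mathbf{P}_{t|t-1}^{-1}(\boldsymbol{\theta}_t - \boldsymbol{\hat{\theta}}_{t|t-1})$, will produce the trust-region KL term with its cubic remainder.

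First I would treat the data-fit summand. By Assumption \ref{As:ConditionalIndependance2}, $y(\mathbf{u}_t)\mid\boldsymbol{\theta}_t$ is Gaussian with mean $h(\mathbf{u}_t;\boldsymbol{\theta}_t)$ and covariance $\mathbf{P}_{\mathbf{n}_t}$, so $\frac{1}{2}\delta^\top \mathbf{P}_{\mathbf{n}_t}^{-1}\delta = -\log p(y(\mathbf{u}_t)\mid\boldsymbol{\theta}_t) + C$ with $C = \log\left(\frac{1}{(2\pi)^{N/2}|\mathbf{P}_{\mathbf{n}_t}|^{1/2}}\right)$. With the assumed diagonal $\mathbf{P}_{\mathbf{n}_t}$ (entries $\sigma_i = N$), the joint likelihood factorizes across the $N$ batch elements, so the term equals $-\sum_{i=1}^N \log p(y(u_t^i)\mid u_t^i,\boldsymbol{\theta}) + C$, i.e. $N$ times the empirical average $-\frac{1}{N}\sum_i \log p(y(u_t^i)\mid u_t^i,\boldsymbol{\theta})$. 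Since the $N$ pairs $(u_t^i, y(u_t^i))$ are drawn from $\hat{Q}_u$ and $\hat{Q}_{y|u}$, this empirical average estimates the cross-entropy $-\mathbb{E}_{Q_{u,y}}[\log p(y\mid u,\boldsymbol{\theta})]$, which I split as $\mathbb{E}_{\hat{Q}_u}[D_{\text{KL}}(\hat{Q}_{y|u}\,\|\,P_{y|u}(\boldsymbol{\theta}))]$ plus the $\boldsymbol{\theta}$-independent conditional entropy of $\hat{Q}_{y|u}$ (folded into the constant). This delivers the $N\,\mathbb{E}_{\hat{Q}_u}[D_{\text{KL}}(\cdot)] + C$ contribution.

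Next I would treat the regularization summand. Setting $\boldsymbol{\theta} = \boldsymbol{\hat{\theta}}_{t|t-1}$ and $\Delta\boldsymbol{\theta} = \boldsymbol{\theta}_t - \boldsymbol{\hat{\theta}}_{t|t-1}$, I invoke the EKF--Fisher identity from the preceding subsection, $\mathbf{P}_{t|t}^{-1} = (t+1)\mathbf{J}_t$; with vanishing evolution noise $\mathbf{P}_{t|t-1} = \mathbf{P}_{t-1|t-1}$, this gives $\mathbf{P}_{t|t-1}^{-1} = t\,\mathbf{J}_{t-1}$, where $\mathbf{J}_{t-1}$ is the Fisher information matrix of the joint model $p(u,y\mid\boldsymbol{\theta})$ at $\boldsymbol{\theta}$ (since $q(u)$ is $\boldsymbol{\theta}$-free, this is the $u$-averaged conditional Fisher matrix, matching the Gauss--Newton $H=\mathbb{E}[J^\top J]$ discussed above). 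The summand then reads $\frac{t}{2}\Delta\boldsymbol{\theta}^\top \mathbf{J}_{t-1}\Delta\boldsymbol{\theta}$. Finally I apply the standard second-order expansion of the KL divergence between nearby members of a parametric family, $D_{\text{KL}}(P_{u,y}(\boldsymbol{\theta}+\Delta\boldsymbol{\theta})\,\|\,P_{u,y}(\boldsymbol{\theta})) = \frac{1}{2}\Delta\boldsymbol{\theta}^\top \mathbf{J}_{t-1}\Delta\boldsymbol{\theta} + \mathcal{O}(\|\Delta\boldsymbol{\theta}\|^3)$, whose first-order term vanishes because the expected score is zero. This converts the summand into $t\cdot D_{\text{KL}}(\cdot) + \mathcal{O}(\|\Delta\boldsymbol{\theta}\|^3)$, and summing the two contributions yields the claim.

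The hard part will be the regularization summand: rigorously pinning down that $\mathbf{P}_{t|t-1}^{-1}$ is exactly $t\,\mathbf{J}_{t-1}$ rather than merely proportional to it, since the cited identity $\mathbf{P}_{t|t}^{-1} = (t+1)\mathbf{J}_t$ is itself a linearized/asymptotic correspondence presuming zero process noise and the Gauss--Newton identification of $\mathbf{J}_t$ with the Fisher matrix. Controlling the cubic KL remainder and justifying the replacement of the finite-$N$ batch average by the population expectation $\mathbb{E}_{Q_{u,y}}$ in the first term (together with absorbing the conditional-entropy constant into $C$) are precisely the points where the statement is an approximation rather than an identity; I would state these as the working assumptions under which the equality holds.
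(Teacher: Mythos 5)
Your decomposition and overall strategy coincide with the paper's proof: split $L_t^{\text{EKF}}$ into the data-fit and regularization summands, convert the first via the Gaussian log-likelihood into $C$ plus $N$ times an expected KL, and convert the second via the Fisher--covariance relation ${\bf P}_{t|t-1}^{-1} = t\,{\bf \hat{F}}_{t|t-1}$ (cited from Ollivier) together with the second-order expansion of the KL divergence, which the paper proves from scratch as its Lemma 2 while you invoke it as standard (legitimately --- the vanishing-score and Fisher-as-Hessian facts you name are exactly what that lemma verifies). Your treatment of the regularization term is sound and matches the paper's.

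The one place you deviate --- and it is the only real defect --- is the first term. You route the batch average $-\frac{1}{N}\sum_{i=1}^N\log p(y(u_t^i)\mid u_t^i,\boldsymbol{\theta})$ through the population cross-entropy $-\mathbb{E}_{Q_{u,y}}[\log p(y\mid u,\boldsymbol{\theta})]$, split off a conditional-entropy constant to be folded into $C$, and then flag the finite-$N$-to-population replacement as a hidden approximation in the statement. None of that is needed, and folding an extra entropy term into $C$ contradicts the theorem's explicit value of $C$. In the paper's proof the distributions $\hat{Q}_u$ and $\hat{Q}_{y|u}$ are the \emph{empirical} training distributions: $\hat{Q}_u$ is uniform on the $N$ batch inputs and $\hat{Q}_{y|u}$ puts unit mass on the single observed label, so $\hat{q}(y\mid u)=1$, the entropy of $\hat{Q}_{y|u}$ is exactly zero, and $D_{\text{KL}}\big(\hat{Q}_{y|u}\,\|\,P_{y|u}(\boldsymbol{\theta})\big) = -\log p(y\mid u,\boldsymbol{\theta})$ pointwise. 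Hence $N\,\mathbb{E}_{\hat{Q}_u}[D_{\text{KL}}\big(\hat{Q}_{y|u}\,\|\,P_{y|u}(\boldsymbol{\theta})\big)] = -\sum_{i=1}^N \log p(y(u_t^i)\mid u_t^i,\boldsymbol{\theta})$ is an exact identity, and the data-fit term equals $C + N\,\mathbb{E}_{\hat{Q}_u}[D_{\text{KL}}]$ with no sampling error whatsoever. The only approximation in the theorem is the $\mathcal{O}(\|\Delta\boldsymbol{\theta}\|^3)$ remainder of the KL Taylor expansion in the second term, exactly as the statement records; your closing paragraph mislocates the approximation.
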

Theorem \ref{theorem:KL} illustrates how EKF is aimed at minimizing two separate KL-divergences. The first is the KL divergence between two conditional distributions,  $\hat{Q}_{y|u}$ and $P_{y|u}(\boldsymbol{\theta})$. This term is equivalent to the loss in $L_t^{\text{MLE}}$ (\ref{eq:vf objective}). The second is the KL divergence between two different parameterizations of the joint learned distribution $P_{u,y}$. This is the term which imposes trust region on the VF parameters in $L_t^{\text{EKF}}$ (\ref{eq:EKF-loss}). The proof for Theorem \ref{theorem:KL} appears in the supplementary material. 
 
\subsection{Practical algorithm: KOVA optimizer} 
\label{section:KOVA}

\begin{figure}[t]
	\centering{
		\includegraphics[width=0.95\linewidth,height=0.5\textheight,keepaspectratio]{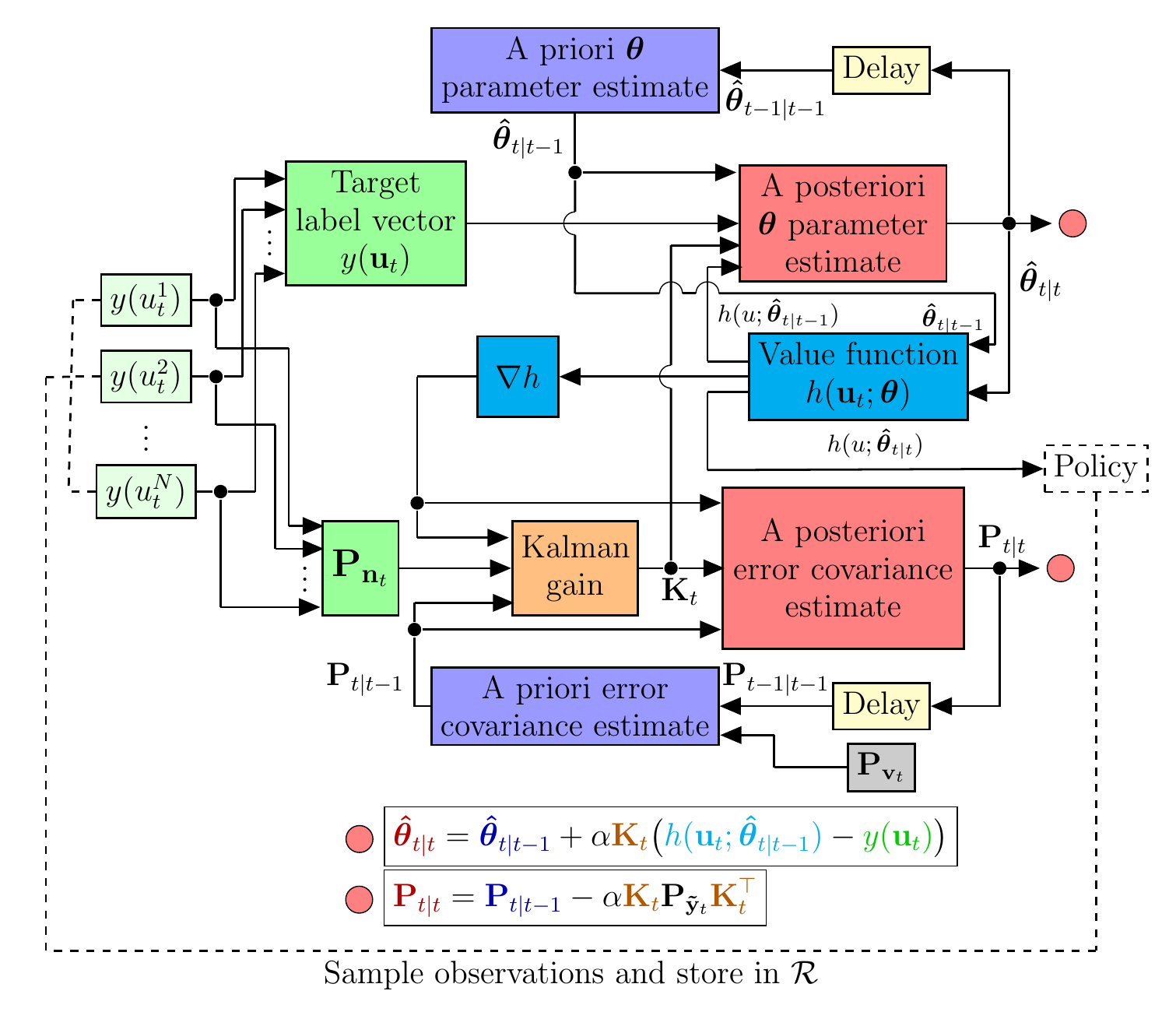}}
	\caption{KOVA optimizer block diagram. KOVA receives as input the initial general prior ${\bf P}_{0|0}$ and the covariances ${\bf P}_{{\bf v}_t}$ and ${\bf P}_{{\bf n}_t}$. It  initializes $\boldsymbol{\hat{\theta}}_{0|0}$ with small random values or with the VF parameters of the previous policy (see the discussion in Section \ref{Section:comparison}). For every $t$, it samples $N$ target labels from $\mathcal{R}$ (see Table \ref{vf_table} for target label examples), constructs $y({\bf u}_t)$ (\ref{eq:label_vector}) and $ h({\bf u}_t$, $\boldsymbol{\hat{\theta}}_{t|t-1})$ (\ref{eq:observation_func_vector}) and computes $\nabla_{\boldsymbol{\theta}_t} h({\bf u}_t; \boldsymbol{\hat{\theta}}_{t|t-1})$ (\ref{eq:nabla_h}) and ${\bf K}_t$  (\ref{eq:statistics})-(\ref{eq:Kalman_gain}). Then it updates and outputs the MAP parameters estimator $\boldsymbol{\hat{\theta}}_{t|t}$ and the error covariance matrix ${\bf P}_{t|t}$ according to Equation (\ref{eq:EKF}).}
	\label{fig:kalman_block_diagram}
\end{figure}

\begin{algorithm}[t]
	{\small 
	\caption{KOVA Optimizer}
	\label{alg:KOVA}
	\begin{algorithmic}[1]
		\REQUIRE ${\bf P}_{0|0}$, ${\bf P}_{{\bf v}_t}$,  ${\bf P}_{{\bf n}_t}$, $\alpha$,  $\mathcal{R}$. {\bf Initialize: } $\boldsymbol{\hat{\theta}}_{0|0}$, $t=0$.
		\FOR {$t=1, \ldots, T$}
		\STATE Set predictions:\\ $\begin{cases}
		\boldsymbol{\hat{\theta}}_{t|t-1} = \boldsymbol{\hat{\theta}}_{t-1|t-1}\\
		{\bf P}_{t|t-1} = {\bf P}_{t-1|t-1} + {\bf P}_{{\bf v}_t}
		\end{cases}$.
		\STATE Sample N tuples $\{y(u^i), h(u^i; \boldsymbol{\hat{\theta}}_{t|t-1})\}_{i=1}^N$ from $\mathcal{R}$.
		\STATE Construct $N$-dim vectors $y({\bf u}_t)$ (\ref{eq:label_vector}) and $ h({\bf u}_t,\boldsymbol{\hat{\theta}}_{t|t-1})$ (\ref{eq:observation_func_vector}).
		\STATE Compute $(d \times N)$-dim matrix $\nabla_{\boldsymbol{\theta}} h({\bf u}_t; \boldsymbol{\hat{\theta}}_{t|t-1})$ (\ref{eq:nabla_h}).
		\STATE ${\bf P}_{\boldsymbol{\tilde{\theta}},{\bf \tilde{y}}_t}  = {\bf P}_{t|t-1} \nabla_{\boldsymbol{\theta}} h({\bf u}_t,  \boldsymbol{\hat{\theta}}_{t|t-1})$.
		\STATE ${\bf P}_{{\bf \tilde{y}_t}} = \nabla_{\boldsymbol{\theta}} h({\bf u}_t; \boldsymbol{\hat{\theta}}_{t|t-1})^\top {\bf P}_{t|t-1} \nabla_{\boldsymbol{\theta}} h({\bf u}_t; \boldsymbol{\hat{\theta}}_{t|t-1})  + {\bf P}_{{\bf n}_t}$.
		\STATE ${\bf K}_t = {\bf P}_{\boldsymbol{\tilde{\theta}}_t,{\bf \tilde{y}}_t} {\bf P}_{{\bf \tilde{y}_t}}^{-1}$
		\STATE Set updates:\\ $\begin{cases}
		\boldsymbol{\hat{\theta}}_{t|t} = \boldsymbol{\hat{\theta}}_{t|t-1} + \alpha {\bf K}_t \big( y({\bf u}_t) - h({\bf u}_t; \boldsymbol{\hat{\theta}}_{t|t-1}) \big)\\
		{\bf P}_{t|t} = {\bf P}_{t|t-1} - \alpha {\bf K}_t {\bf P}_{{\bf \tilde{y}}_t} {\bf K}_t^\top
		\end{cases}$
		\ENDFOR
		\ENSURE $\boldsymbol{\hat{\theta}_{t|t}}$ and ${\bf P}_{t|t}$
	\end{algorithmic}
}
\end{algorithm}
We now derive a practical algorithm for approximating VFs, by minimizing the objective function $L^{\text{EKF}}_t$ (\ref{eq:EKF-loss}). In practice we use the update Equations (\ref{eq:EKF}) and the Kalman gain Equation in (\ref{eq:statistics})-(\ref{eq:Kalman_gain}) in order to avoid inversing ${\bf P}_{t|t-1}$. In addition, we add a fixed learning rate $\alpha$ to smooth the update. The KOVA optimizer is presented in Algorithm \ref{alg:KOVA} and illustrated in Figure \ref{fig:kalman_block_diagram}. Notice that $\mathcal{R}$ is a samples generator whose structure depends on the policy algorithm for which KOVA is used as a VF optimizer. $\mathcal{R}$ can contain trajectories from a fixed policy or it can be an experience replay which contains transitions from several different policies. 

{\bf Algorithm complexity:} For a $d$-dimensional parameter vector $\boldsymbol{\theta} \in \mathbb{R}^d$, our algorithm requires $\mathcal{O}(d^2)$ extra space to store the covariance matrix and $\mathcal{O}(d^2)$ computations for matrix multiplications. Note that our update method does not require inverting the $\small (d \times d)$-dimensional matrix ${\bf P}_{t|t-1}$ in the update process, but only requires inverting the  $\small (N\times N)$-dimensional matrix $\small \big( \nabla h(\boldsymbol{\hat{\theta}})^\top {\bf P}_{t|t-1} \nabla h( \boldsymbol{\hat{\theta}}) + {\bf P}_{{\bf n}_t} \big)^{-1}$. Usually, $N \ll d$. 
The extra time and memory requirements can be tolerated for small-medium networks with size $d$. However, it can be considered as a drawback of the algorithm for large network sizes. Fortunately, there are several options for overcoming these drawbacks: {\bf (a)} The use of GPU for matrix multiplications can accelerate the computation time. {\bf (b)} We can assume correlations only between blocks of parameters, for example, between parameters in the same DNN layer, and apply layer factorization. This can reduce significantly the computation and memory requirements \cite{puskorius1991decoupled,zhang2017learning,wu2017scalable}. {\bf (c)} We can apply the Kalman optimization method only on the last layer in large DNNs. This approach was used by \citet{levine2017shallow} where they optimized the last layer using linear least squares optimization methods. %We give some motivation to use this last-layer Kalman optimization approach in the Experiments Section.  
We emphasis that yet, our approach scales with large state and action spaces, and is suitable for continuous control problems which are considered hard domains. 

\section{Related Work}

\begin{figure*}[htp]
	\centering
	\subfigure[]{\label{fig:mujoco_reward_ppo}\includegraphics[width=0.49\linewidth,height=0.5\textheight,keepaspectratio]{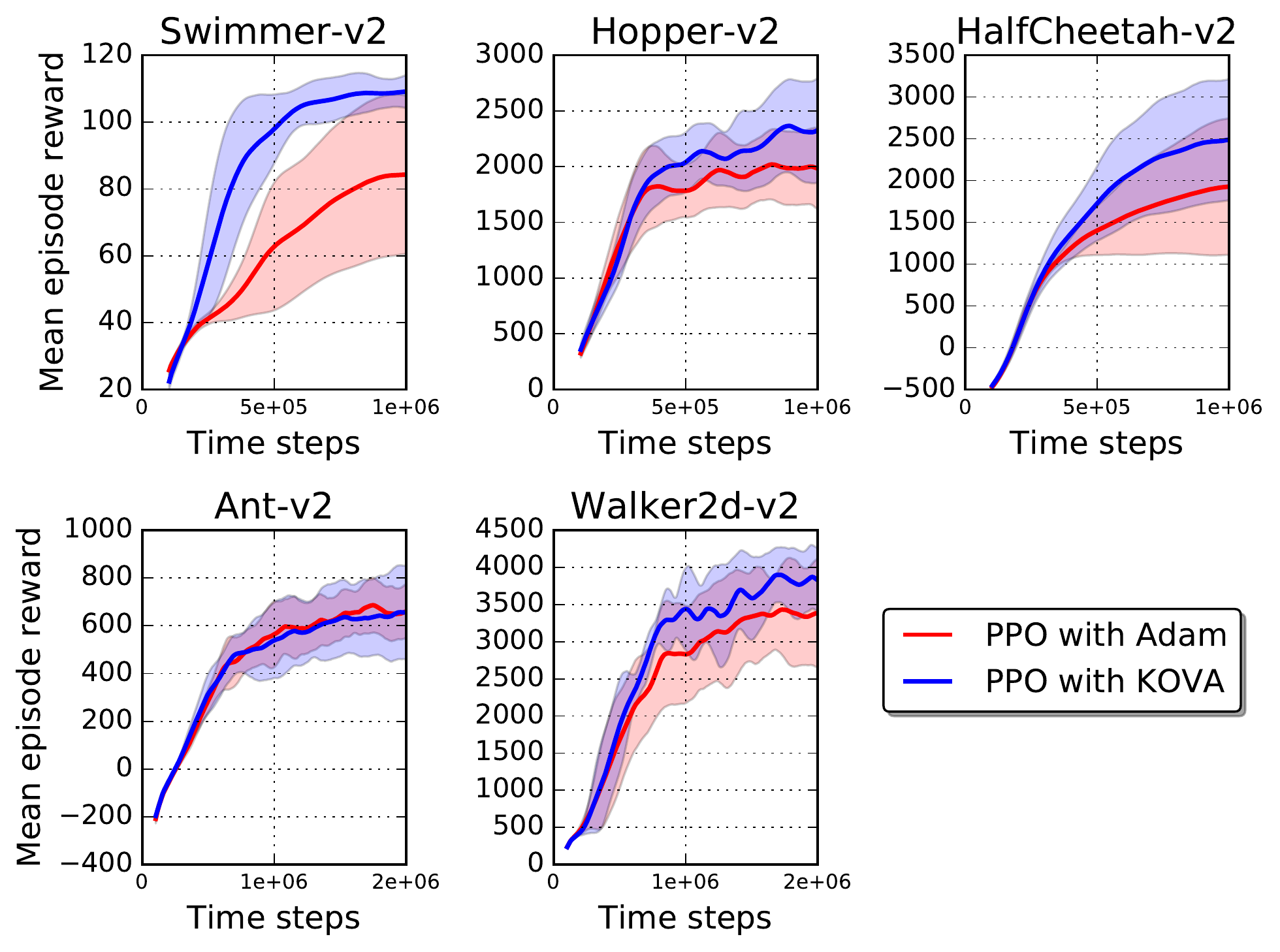}}
	\subfigure[]{\label{fig:mujoco_reward_trpo}\includegraphics[width=0.49\linewidth,height=0.5\textheight,keepaspectratio]{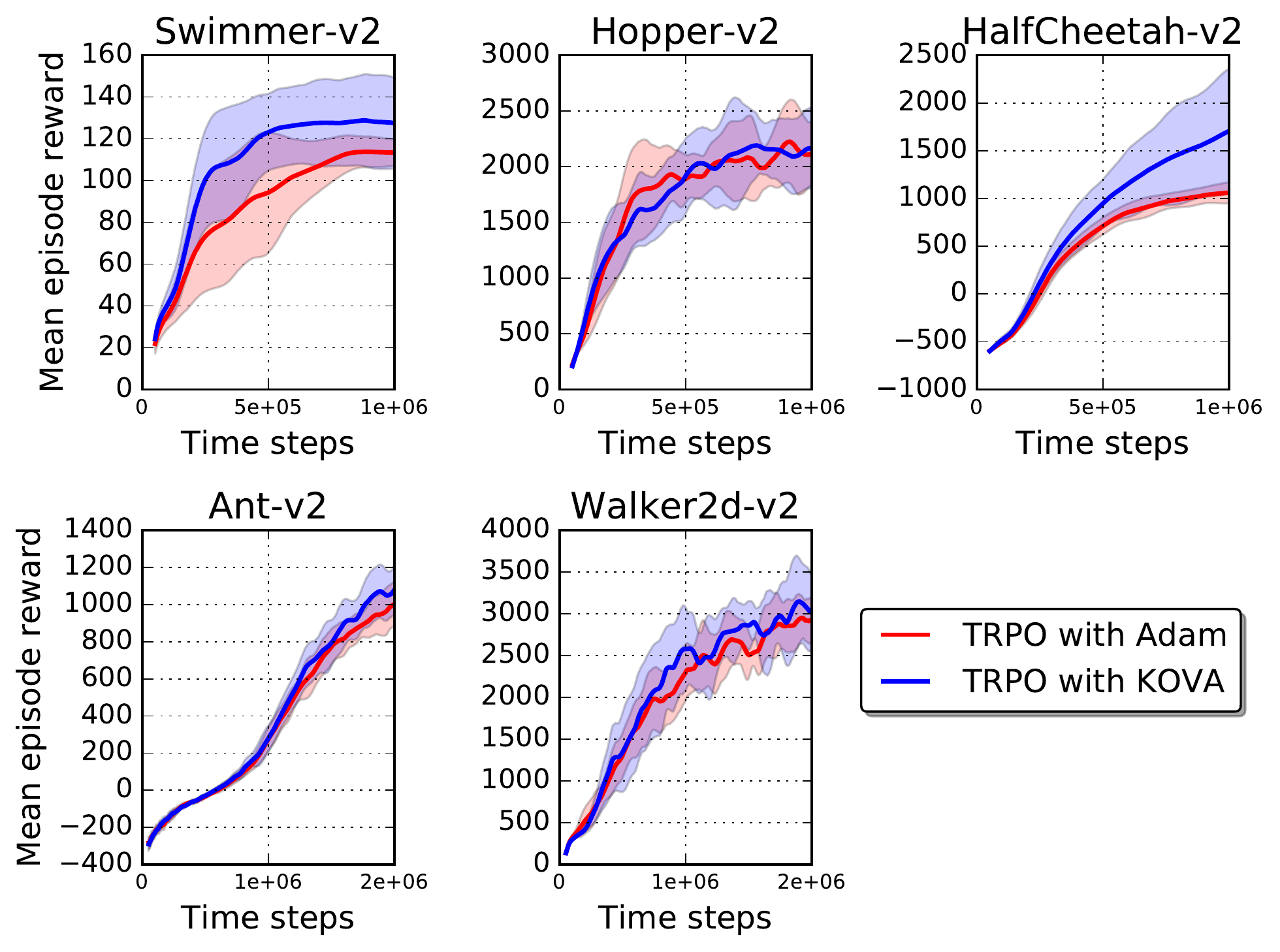}}
	\caption{Mean episode reward during training for Mujoco environments. {\bf (a)} PPO or {\bf (b)} TRPO are used as policy optimization algorithms. We compare between Adam and KOVA optimizers for policy evaluation. For Swimmer-v2, Hopper-v2 and HalfCheetah-v2 we trained over one million time steps and for  Ant-v2 and Walker2d-v2 we trained over two million time steps. We present the average (solid lines) and standard deviation (shaded area) of the episodes rewards over 8 runnings, generated from random seeds.}
	\label{fig:mujoco_reward}
\end{figure*}

{\bf Bayesian Neural Networks (BNNs):} There are several works on Bayesian methods for placing uncertainty on the approximator parameters \cite{blundell2015weight,gal2016dropout}. \citet{depeweg2016learning,depeweg2017decomposition} have used BNNs for learning MDP dynamics in RL tasks. In these works a fully factorized Gaussian distribution on parameters is assumed  while we consider possible correlations between parameters. In addition, BNNs require sampling the parameters, and running several feed-forward runs for each of the parameters samples. Our incremental method avoids multiple samples of the parameters, since the uncertainty is propagated with every optimization update. 

{\bf Kalman filters:} Outside of the RL framework, the use of Kalman filter as an optimization method is discussed in \cite{haykin2001kalman,vuckovic2018kalman,gomez2018decoupled}. \citet{wilson2009neural} solve the dynamics of each parameter with Kalman filtering. \citet{wang2018batch} use Kalman filter for normalizing batches. In our work we use Kalman filtering for VF optimization in the context of RL. EKF is connected with the incremental Gauss-Newton method \cite{bertsekas1996incremental}, and with the on-line natural gradient \cite{ollivier2018online}. These methods require inversing the $(d \times d)$-dimensional Fisher information matrix (for $d$-dimensional parameter), thus require high computational resources. Our method avoids this inversion in the update step which is more computationally efficient. 

{\bf Trust region for policies:} The natural gradient method, when applied to RL tasks, is mostly used in policy gradient algorithms to estimate the parameters of the policy \cite{kakade2002natural,peters2008natural,schulman2015trust}. Trust region methods in RL have been developed for parameterized policies \cite{schulman2015trust,schulman2017proximal}. Despite that, trust region methods for parametrized VFs are rarely presented in the RL literature. Recently, \citet{wu2017scalable} suggested to apply the natural gradient method also on the critic in the actor-critic framework, using Kronecker-factored approximations. \citet{schulman2015high} suggested to apply Gauss-Newton method to estimate the VF. However, they did not analyze and formalize the underlying model and assumptions that lead to the regularization in the objective function, while this is the focus in our work.

\begin{figure*}[htp]
	\centering
	\subfigure[]{\label{fig:entropy_ppo_max_ratio}\includegraphics[width=0.49\linewidth,height=0.5\textheight,keepaspectratio]{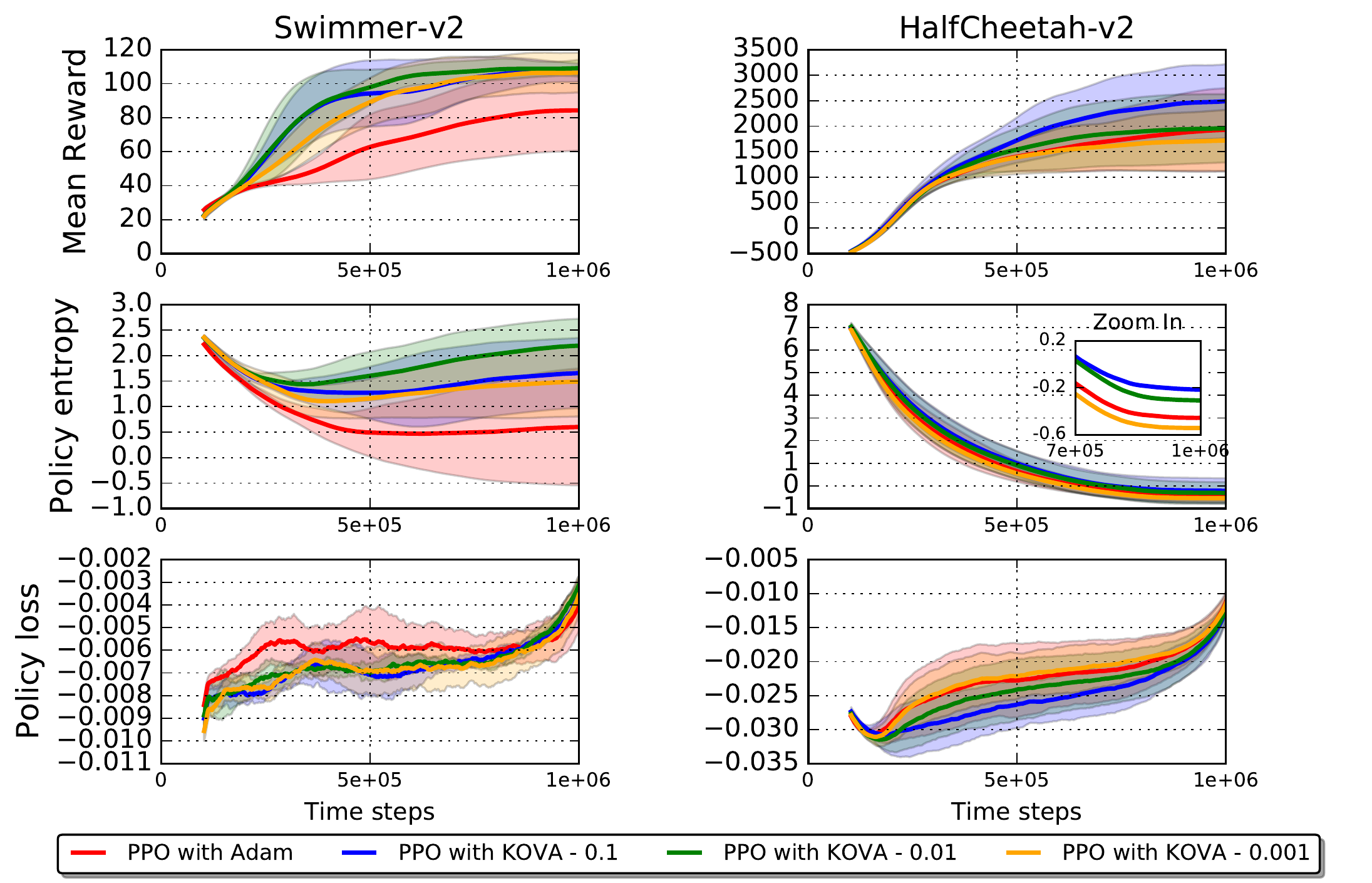}}
	\subfigure[]{\label{fig:entropy_ppo_batch_size}\includegraphics[width=0.49\linewidth,height=0.5\textheight,keepaspectratio]{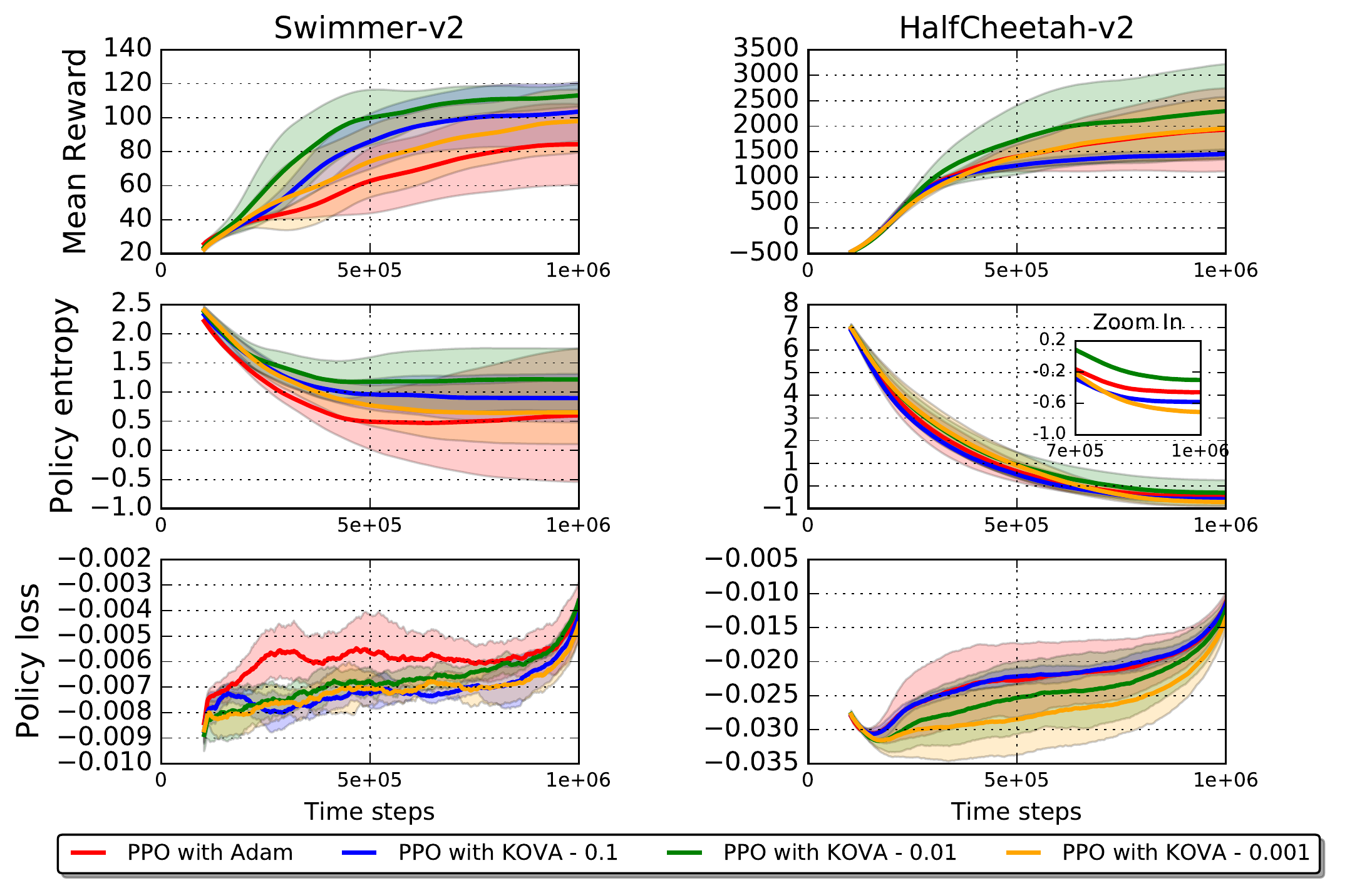}}
	\caption{Mean episode reward, policy entropy and the policy loss for a PPO agent in the Mujoco environments Swimmer-v2 and HalfCheetah-v2. We compare between optimizing the VF  with Adam vs. our KOVA optimizer. For KOVA, we present three different values for $\eta = 0.1, 0.01, 0.001$ and two different values for the diagonal elements in ${\bf P}_{{\bf n}_t}$: {\bf (a)} max-ratio and {\bf (b)} batch-size. We present the average (solid lines) and standard deviation (shaded area) of the episodes rewards over 8 runnings, generated from random seeds.} 
	\label{fig:entropy_ppo}
\end{figure*}
{\bf Distributional perspective on values and observations:} Distributional RL \cite{bellemare2017distributional} treats the full (general) distribution of total return, and considers VF parameters as deterministic. In our work we assume Gaussian distribution over the total return and in addition Gaussian distribution over the VF parameters.

Our work may be seen as a modern extension of GPTD \cite{engel2003bayes,engel2005reinforcement} for DRL domains with continuous state and action spaces. GPTD uses Gaussian Processes (GPs) for both VF and total return, for solving the RL problem of value estimation. We introduce here several improvements and generalizations over their work: (1) Our formulation is adapted to learning nonlinear VF approximations, as common in DRL; (2) We include a fading memory option for previous observations by using a decay factor in the error covariance prediction (${\bf P}_{{\bf v}_t}$); (3) We allow for a general observation noise covariance (not necessarily diagonal) and for a general noisy observations (not only 1-step TD errors); (4) Our observation vector $y({\bf u})$ has a fixed size $N$ (the batch size) as opposed to the growing size vectors in GPTD which grow for any new observation and make it difficult to train in DRL domains.

The use of Kalman filters to solve RL tasks was proposed by \citet{geist2010kalman}. Their formulation, called Kalman Temporal Difference (KTD), serves as the base for our formulation for the optimizer we propose. We introduce here several differences between their work and ours: (1) We re-formulate the observation equation (\ref{eq:EKF}) to increase training stability by using a target network for the VF that appears in the target label (see Table \ref{vf_table}). With this formulation, the observation function is simply the VF of the current input; (2) We use the Extended Kalman filter as opposed to their use of the Unscented Kalman filter to approximate nonlinear functions \cite{julier1997new, wan2000unscented}. In our formulation, the observation function is differential, allowing us to use first order Taylor expansion linearization. The UKF has shown superior performance in some applications \cite{st2004comparison, van2004sigma}, however, its computational cost is much greater than the computational cost of the EKF, due to its requirement of sampling the parameters in each training step for $2d$ times. Moreover, it requires to evaluate the observation function at these samples at every training step. Unfortunately, this is not tractable in DNNs where the parameters might be high-dimensional.

\section{Experiments}
\label{Section:experiments}
In this section we present experiments that illustrate the performance attained by our KOVA optimizer\footnote{Code can be found in: https://github.com/KOVA-trustregion/KOVA. Technical details on policy and VF networks and on the hyper-parameters we used are described in the supplementary material.}. 

{\bf KOVA optimizer for policy evaluation:} We tested the performance of KOVA in domains with high state and action spaces: the robotic tasks benchmarks implemented in OpenAI Gym \cite{brockman2016openai}, which use the MuJoCo physics engine \cite{todorov2012mujoco}. For the policy training we used  PPO \cite{schulman2017proximal} and TRPO \cite{schulman2015trust} and used their baselines implementations \cite{baselines}. For VF training we replaced the originally used Adam optimizer \cite{kingma2014adam} with our KOVA optimizer (Algoritm \ref{alg:KOVA}) and compared their affect on the mean episode reward in each environment. The results are presented in Figure \ref{fig:mujoco_reward}. When training with PPO, we can see that KOVA improved the agent's performance in four out of five environments. In Ant-v2 it kept approximately the same performance. When training with TRPO, we can see that KOVA improved the agent's performance mostly in Swimmer-v2 and HalfCheetah-v2. These improvements, both in PPO and in TRPO,  demonstrate the importance of incorporating uncertainty estimation in value function approximation for improving the agent's performance. 

{\bf Investigating the evolution and observation noises: } The most interesting hyper-parameters in KOVA are related to the covariances ${\bf P}_{{\bf v}_t}$ and ${\bf P}_{{\bf n}_t}$. As seen in Corollary \ref{corollary1}, for deterministic interpretation of the parameters we simply set ${\bf P}_{{\bf v}_t} = {\bf 0}$. However, the more interesting setting would be ${\bf P}_{{\bf v}_t} = \frac{\eta}{1-\eta}{\bf P}_{t-1|t-1}$ with $\eta$ being a small number that controls the amount of fading memory \cite{ollivier2018online}. ${\bf P}_{{\bf n}_t}$ can be used for incorporating prior domain knowledge. For example, a diagonal matrix implies independent observations , while if observations are known to be correlated, additional non-diagonal elements can be added. We investigated the effect of different values of $\eta$ and ${\bf P}_{{\bf n}_t}$ in the Swimmer and HalfCheetah environments, where KOVA gained the most success. The results are depicted in Figure \ref{fig:entropy_ppo}. We tested two different ${\bf P}_{{\bf n}_t}$ settings: the {\it batch-size} setting where $\sigma_i=\sigma=N$ and the {\it max-ratio} setting where $\sigma_i=N \max (1, \frac{1}{\frac{\pi_{\text{old}} (a_i|s_i)}{\pi_{\text{new}}(a_i|s_i)} + \epsilon})$. Interestingly, although using KOVA results in lower policy loss (which we try to maximize), it actually increases the policy entropy and encourages exploration, which we believe helps in gaining higher rewards during training. We can clearly see how the mean rewards increases as the policy entropy increases, for different values of $\eta$. This insight was observed in both tested Mujoco environments and in both settings of ${\bf P}_{{\bf n}_t}$.

\section{Conclusion}
In this work we presented a novel regularized objective function for optimizing VFs in policy evaluation, which originates from a Bayesian perspective over both noisy observations and value parameters. Our empirical results illustrate how the KOVA optimizer can improve the performance of various RL agents in domains with large state and action spaces. For future work, it would be interesting to further investigate the connection between trust region over value parameters and trust region over policy parameters and how to use this connection to improve exploration. 

\bibliographystyle{icml2019}
\bibliography{KalmanRLArxiv}

\newpage
\appendix
% Supplementary Material
\section*{Supplementary Material}
\numberwithin{equation}{section}
\counterwithin{table}{section}
\section{Theoretical Results}
\subsection{Extended Kalman Filter (EKF)}
\label{suppSection:EKF}
In this section we briefly outline the Extended Kalman filter \cite{anderson1979optimal, gelb1974applied}. The EKF considers the following model:
\begin{equation}
\label{suppeq:Extended-Kalman}
\begin{cases}
\boldsymbol{\theta}_t = \boldsymbol{\theta}_{t-1} + {\bf v}_t\\
y ({\bf u}_t) = h({\bf u}_t; \boldsymbol{\theta}_t) + {\bf n}_t
\end{cases},
\end{equation}
where $\boldsymbol{\theta}_t \in \mathbb{R}^{d \times 1}$ are the parameters evaluated at time $t$, $y ({\bf u}_t) = [ 
 y(u_t^1),  y(u_t^2), \ldots, y(u_t^N)  ]^\top \in \mathbb{R}^{N \times 1}$ is the $N$-dimensional observation vector at time $t$, and $h({\bf u}_t; \boldsymbol{\theta}_t) = [
h(u_t^1; \boldsymbol{\theta}_t),
h(u_t^2; \boldsymbol{\theta}_t), 
\ldots,
h(u_t^N; \boldsymbol{\theta}_t)  
^\top \in \mathbb{R}^{N \times 1}$
where $h(u; \boldsymbol{\theta})$ is a nonlinear observation function with input $u$ and parameters $\boldsymbol{\theta}$.

The evolution noise ${\bf v}_t$ is white ($\mathbb{E}[{\bf v}_t] = {\bf 0}$) with covariance ${\bf P}_{{\bf v}_t} \triangleq \mathbb{E}[{\bf v}_t {\bf v}_t^\top]$, $\ \mathbb{E}[{\bf v}_t {\bf v}_{t'}^\top] = {\bf 0}, \quad \forall t \neq t'$.

The observation noise ${\bf n}_t$ is white ($\mathbb{E}[{\bf n}_{t}]={\bf 0}$) with covariance ${\bf P}_{{\bf n}_t} \triangleq \mathbb{E}[{\bf n}_t {\bf n}_t^\top]$, $\ \mathbb{E}[{\bf n}_t {\bf n}_{t'}^\top] = {\bf 0}, \quad \forall t \neq t'$.

The EKF sets the estimation of the parameters $\boldsymbol{\theta}$ at time $t$ according to the conditional expectation: 
\[\boldsymbol{\hat{\theta}}_{t|t}  \triangleq \mathbb{E} [ \boldsymbol{\theta}_t | y_{1:t})] \]
\begin{equation}
\label{suppeq:weight_estimation}
\boldsymbol{\hat{\theta}}_{t|t-1}  \triangleq \mathbb{E} [ \boldsymbol{\theta}_t | y_{1:t-1}]  = \boldsymbol{\hat{\theta}}_{t-1|t-1} 
\end{equation}
 
where with some abuse of notation, $y_{1:t'}$ are the observations gathered up to time $t'$: $y ({\bf u}_1), \ldots, y ({\bf u}_t')$. The {\it parameters errors} are defined by: 
\[\boldsymbol{\tilde{\theta}}_{t|t}  \triangleq \boldsymbol{\theta}_t - \boldsymbol{\hat{\theta}}_{t|t} \]
\begin{equation}
\label{suppeq:weights_error}
\boldsymbol{\tilde{\theta}}_{t|t-1} \triangleq \boldsymbol{\theta}_t - \boldsymbol{\hat{\theta}}_{t|t-1}
\end{equation}

The conditional {\it error covariances} are given by: 
\begin{align*}
{\bf P}_{t|t}  & \triangleq \mathbb{E} \big[  \boldsymbol{\tilde{\theta}}_{t|t}  \boldsymbol{\tilde{\theta}}_{t|t}^\top | y_{1:t}\big], \\
{\bf P}_{t|t-1} & \triangleq \mathbb{E} \big[  \boldsymbol{\tilde{\theta}}_{t|t-1}  \boldsymbol{\tilde{\theta}}_{t|t-1}^\top | y_{1:t-1}\big] \\
& = \mathbb{E} \big[  (\boldsymbol{\theta}_t - \boldsymbol{\hat{\theta}}_{t|t-1}) (\boldsymbol{\theta}_t - \boldsymbol{\hat{\theta}}_{t|t-1})^\top | y_{1:t-1}\big]\\
& = \mathbb{E} \big[  (\boldsymbol{\theta}_{t-1} + {\bf v}_t - \boldsymbol{\hat{\theta}}_{t-1|t-1}) \\
& \quad \quad \quad (\boldsymbol{\theta}_{t-1} + {\bf v}_t - \boldsymbol{\hat{\theta}}_{t-1|t-1})^\top | y_{1:t-1}\big]\\
& = \mathbb{E} \big[  (\boldsymbol{\tilde{\theta}}_{t-1|t-1} + {\bf v}_t) (\boldsymbol{\tilde{\theta}}_{t-1|t-1} + {\bf v}_t)^\top | y_{1:t-1}\big]\\
& = \mathbb{E} \big[  (\boldsymbol{\tilde{\theta}}_{t-1|t-1} \boldsymbol{\tilde{\theta}}_{t-1|t-1}^\top | y_{1:t-1}\big]\\
& +2 \mathbb{E} \big[ \boldsymbol{\tilde{\theta}}_{t-1|t-1} {\bf v}_t^\top| y_{1:t-1} \big] + \mathbb{E} \big[ {\bf v}_t {\bf v}_t^\top | y_{1:t-1}\big]\\
& = {\bf P}_{t-1|t-1} + {\bf P}_{{\bf v}_t}.
\end{align*}
\begin{equation}
\label{suppeq:error_covariance}
\boxed{{\bf P}_{t|t-1} = {\bf P}_{t-1|t-1} + {\bf P}_{{\bf v}_t}}
\end{equation}
EKF considers several statistics of interest at each time step:
{\it The prediction of the observation function}:
\[{\bf \hat{y}}_{t|t-1} \triangleq  \mathbb{E}[h({\bf u}_t; \boldsymbol{\theta}_t)|y_{1:t-1}].\]
{\it The observation innovation}:
\[{\bf \tilde{y}}_{t|t-1} \triangleq h({\bf u}_t; \boldsymbol{\theta}_t) - {\bf \hat{y}}_{t|t-1}.\]
{\it The covariance between the parameters error and the innovation}:
\[{\bf P}_{\boldsymbol{\tilde{\theta}}_t,{\bf \tilde{y}}_{t}}  \triangleq \mathbb{E}[ \boldsymbol{\tilde{\theta}}_{t|t-1} {\bf \tilde{y}}_{t|t-1}^\top |y_{1:t-1}].\]
{\it The covariance of the innovation}:
\[ {\bf P}_{{\bf \tilde{y}}_t}  \triangleq \mathbb{E}[( {\bf \tilde{y}}_{t|t-1} {\bf \tilde{y}}_{t|t-1}^\top|y_{1:t-1}] + {\bf P}_{{\bf n}_t}. \]
The {\it Kalman gain}:
\[{\bf K}_t \triangleq {\bf P}_{\boldsymbol{\tilde{\theta}}_t,{\bf \tilde{y}}_{t}}  {\bf P}_{{\bf \tilde{y}}_t}^{-1}.\]
The above statistics serve for the update of the parameters and the error covariance:
\begin{equation}
\label{suppeq:kalman update}
\begin{cases}
\boldsymbol{\hat{\theta}}_{t|t} = \boldsymbol{\hat{\theta}}_{t|t-1} +    {\bf K}_t \big( y({\bf u}_t) - h({\bf u}_t; \boldsymbol{\hat{\theta}}_{t|t-1}) \big),\\
{\bf P}_{t|t} = {\bf P}_{t|t-1} - {\bf K}_t  {\bf P}_{{\bf \tilde{y}}_t} {\bf K}_t^\top.
\end{cases}
\end{equation}

\subsection{EKF for Value Function Estimation}
\label{suppSection:EKF_Qlearning}
When applying the EKF formulation to value functions approximation, the observation at time $t$ is  the target label $y({\bf u}_t)$ (see Table 1 in the main article), and the observation function $h$ can be the state value function, the state action value function or the advantage function.   

The EKF uses a first order Taylor series linearization for the observation function:
\begin{equation}
\label{suppeq:Linearization}
h({\bf u}_t; \boldsymbol{\theta}_t) 
= h({\bf u}_t; \boldsymbol{\hat{\theta}}) +   \nabla_{\boldsymbol{\theta}_t} h({\bf u}_t; \boldsymbol{\hat{\theta}})^\top \big( \boldsymbol{\theta}_{t} - \boldsymbol{\hat{\theta}} \big),
\end{equation}
where $\nabla_{\boldsymbol{\theta}_t} h({\bf u}_t; \boldsymbol{\hat{\theta}})  = \begin{bmatrix} 
\nabla_{\boldsymbol{\theta}_t} h(u_t^1; \boldsymbol{\hat{\theta}}) , \ldots , \nabla_{\boldsymbol{\theta}_t} h(u_t^N; \boldsymbol{\hat{\theta}})  \end{bmatrix} \in \mathbb{R}^{d \times N}$
and $\boldsymbol{\hat{\theta}}$ is typically chosen to be the previous estimation of the parameters at time $t-1$,  $\boldsymbol{\hat{\theta}}_{t|t-1}$. This linearization helps in computing the statistics of interest. Recall that the expectation here is over the random variable $\boldsymbol{\theta}_t$ where $\boldsymbol{\hat{\theta}}_{t|t-1}$ is fixed. For simplicity, we keep to write $\boldsymbol{\hat{\theta}}$. 
{\it The prediction of the observation function}:
\begin{align*}
{\bf \hat{y}}_{t|t-1} & \triangleq  \mathbb{E}[h({\bf u}_t; \boldsymbol{\theta}_t)|y_{1:t-1}]\\
& \underbrace{=}_{(\ref{suppeq:Linearization})} \mathbb{E}\Big[h({\bf u}_t,\boldsymbol{\hat{\theta}})
+   \nabla_{\boldsymbol{\theta}_t} h({\bf u}_t; \boldsymbol{\hat{\theta}})^\top \big( \boldsymbol{\theta}_{t} - \boldsymbol{\hat{\theta}} \big) |y_{1:t-1}\Big]\\
& = h({\bf u}_t; \boldsymbol{\hat{\theta}})  +   \nabla_{\boldsymbol{\theta}_t} h({\bf u}_t; \boldsymbol{\hat{\theta}})^\top \big( \mathbb{E}[  \boldsymbol{\theta}_{t} |y_{1:t-1}] - \boldsymbol{\hat{\theta}} \big)\\
& \underbrace{=}_{(\ref{suppeq:weight_estimation})} h({\bf u}_t; \boldsymbol{\hat{\theta}})  +    \nabla_{\boldsymbol{\theta}_t} h({\bf u}_t; \boldsymbol{\hat{\theta}})^\top \big( \boldsymbol{\hat{\theta}} - \boldsymbol{\hat{\theta}} \big)\\
& = h({\bf u}_t; \boldsymbol{\hat{\theta}}) = h({\bf u}_t; \boldsymbol{\hat{\theta}}_{t|t-1})
\end{align*}
We conclude that:
\begin{equation}
\label{suppeq:prediction_observation}
\boxed{{\bf \hat{y}}_{t|t-1}   = h({\bf u}_t; \boldsymbol{\hat{\theta}}_{t|t-1}) }
\end{equation}
{\it The observation innovation}:
\begin{equation}
\label{suppeq:observation_innovation}
\boxed{ {\bf \tilde{y}}_{t|t-1} \triangleq h({\bf u}_t; \boldsymbol{\theta}_t) -{\bf \hat{y}}_{t|t-1}  \underbrace{=}_{(\ref{suppeq:prediction_observation})} h({\bf u}_t,  \boldsymbol{\theta}_t) - h({\bf u}_t; \boldsymbol{\hat{\theta}}_{t|t-1}) }
\end{equation} 
Let's simplify the following:
\begin{align}
\label{suppeq:simplify}
\nonumber & h({\bf u}_t,  \boldsymbol{\theta}_t) - h({\bf u}_t; \boldsymbol{\hat{\theta}}) \\
\nonumber & \underbrace{=}_{(\ref{suppeq:Linearization})} \big( \cancel{h({\bf u}_t; \boldsymbol{\hat{\theta}})}  +     \nabla_{\boldsymbol{\theta}_t} h({\bf u}_t; \boldsymbol{\hat{\theta}})^\top \big( \boldsymbol{\theta}_{t} - \boldsymbol{\hat{\theta}} \big) - \cancel{h({\bf u}_t; \boldsymbol{\hat{\theta}}) \big)}\\
& = \nabla_{\boldsymbol{\theta}_t} h({\bf u}_t; \boldsymbol{\hat{\theta}})^\top \big( \boldsymbol{\theta}_{t} - \boldsymbol{\hat{\theta}} \big)
\end{align}
{\it The covariance between the parameters error and the innovation} (here we also denote $\boldsymbol{\hat{\theta}} = \boldsymbol{\hat{\theta}}_{t|t-1}$):
\begin{align*}
{\bf P}_{\boldsymbol{\tilde{\theta}}_t,{\bf \tilde{y}}_{t}}  & \triangleq \mathbb{E}[ \boldsymbol{\tilde{\theta}}_{t|t-1} {\bf \tilde{y}}_{t|t-1}^\top |y_{1:t-1}]\\
& \underbrace{=}_{(\ref{suppeq:weights_error}) + (\ref{suppeq:observation_innovation})} \mathbb{E}[ \big(\boldsymbol{\theta}_t - \boldsymbol{\hat{\theta}} \big) \big( h({\bf u}_t; \boldsymbol{\theta}_t) - h({\bf u}_t; \boldsymbol{\hat{\theta}}) \big)^\top |y_{1:t-1}]\\
& \underbrace{=}_{(\ref{suppeq:simplify})} \mathbb{E}[ \big(\boldsymbol{\theta}_t - \boldsymbol{\hat{\theta}} \big)  \big( \boldsymbol{\theta}_{t} - \boldsymbol{\hat{\theta}} \big)^\top \nabla_{\boldsymbol{\theta}_t} h({\bf u}_t; \boldsymbol{\hat{\theta}})  |y_{1:t-1}]\\
& \underbrace{=}_{(\ref{suppeq:weights_error})} \mathbb{E}[ \boldsymbol{\tilde{\theta}}_{t|t-1}  \boldsymbol{\tilde{\theta}}_{t|t-1}^\top  |y_{1:t-1}] \nabla_{\boldsymbol{\theta}_t} h({\bf u}_t; \boldsymbol{\hat{\theta}})\\
& \underbrace{=}_{(\ref{suppeq:error_covariance})} {\bf P}_{t|t-1} \nabla_{\boldsymbol{\theta}_t} h({\bf u}_t; \boldsymbol{\hat{\theta}}_{t|t-1})
\end{align*}
\begin{equation}
\label{suppeq:covariance_weights_innovation}
\boxed{{\bf P}_{\boldsymbol{\tilde{\theta}}_t,{\bf \tilde{y}}_{t}} = {\bf P}_{t|t-1} \nabla_{\boldsymbol{\theta}_t} h({\bf u}_t; \boldsymbol{\hat{\theta}}_{t|t-1})}
\end{equation}
{\it The covariance of the innovation}:
\begin{align*}
 {\bf P}_{{\bf \tilde{y}}_t}  & \triangleq \mathbb{E}[( {\bf \tilde{y}}_{t|t-1} {\bf \tilde{y}}_{t|t-1}^\top|y_{1:t-1}] + {\bf P}_{{\bf n}_t}\\
 & \underbrace{=}_{(\ref{suppeq:observation_innovation})} \mathbb{E}[ \big( h({\bf u}_t,  \boldsymbol{\theta}_t) - h({\bf u}_t; \boldsymbol{\hat{\theta}}) \big) \big( h({\bf u}_t,  \boldsymbol{\theta}_t) - h({\bf u}_t; \boldsymbol{\hat{\theta}}) \big)^\top |y_{1:t-1}]\\
 & \quad \quad + {\bf P}_{{\bf n}_t}\\
 & \underbrace{=}_{(\ref{suppeq:simplify})} \mathbb{E} \Big[     \nabla_{\boldsymbol{\theta}_t} h({\bf u}_t; \boldsymbol{\hat{\theta}})^\top \big( \boldsymbol{\theta}_{t} - \boldsymbol{\hat{\theta}} \big)  \big( \boldsymbol{\theta}_{t} - \boldsymbol{\hat{\theta}} \big)^\top   \nabla_{\boldsymbol{\theta}_t} h({\bf u}_t; \boldsymbol{\hat{\theta}})  |y_{1:t-1} \Big]\\
 & \quad \quad + {\bf P}_{{\bf n}_t}\\
 & \underbrace{=}_{(\ref{suppeq:weights_error})} \nabla_{\boldsymbol{\theta}_t} h({\bf u}_t,   \boldsymbol{\hat{\theta}})^\top \mathbb{E}[ \boldsymbol{\tilde{\theta}}_{t|t-1}  \boldsymbol{\tilde{\theta}}_{t|t-1}^\top    |y_{1:t-1}] \\
 &\quad \quad \nabla_{\boldsymbol{\theta}_t} h({\bf u}_t,  \boldsymbol{\hat{\theta}}) + {\bf P}_{{\bf n}_t}\\
 & =\nabla_{\boldsymbol{\theta}_t} h({\bf u}_t,   \boldsymbol{\hat{\theta}})^\top {\bf P}_{t|t-1} \nabla_{\boldsymbol{\theta}_t} h({\bf u}_t,   \boldsymbol{\hat{\theta}})^\top + {\bf P}_{{\bf n}_t}
\end{align*}
\begin{equation}
\label{suppeq:covariance_innovation}
\boxed{ {\bf P}_{{\bf \tilde{y}}_t}   = \nabla_{\boldsymbol{\theta}_t} h({\bf u}_t; \boldsymbol{\hat{\theta}}_{t|t-1})^\top {\bf P}_{t|t-1} \nabla_{\boldsymbol{\theta}_t} h({\bf u}_t; \boldsymbol{\hat{\theta}}_{t|t-1}) + {\bf P}_{{\bf n}_t}}
\end{equation}
The {\it Kalman gain}:
\begin{align}
\label{suppeq:kalman_gain}
\nonumber {\bf K}_t & \triangleq {\bf P}_{\boldsymbol{\tilde{\theta}}_t,{\bf \tilde{y}}_{t}}  {\bf P}_{{\bf \tilde{y}}_t}^{-1}\\
\nonumber & \underbrace{=}_{(\ref{suppeq:covariance_weights_innovation}) + (\ref{suppeq:covariance_innovation})} {\bf P}_{t|t-1} \nabla_{\boldsymbol{\theta}_t} h({\bf u}_t,  \boldsymbol{\hat{\theta}}_{t|t-1})\\
& \Big( \nabla_{\boldsymbol{\theta}_t} h({\bf u}_t,  \boldsymbol{\hat{\theta}}_{t|t-1})^\top {\bf P}_{t|t-1} \nabla_{\boldsymbol{\theta}_t} h({\bf u}_t,  \boldsymbol{\hat{\theta}}_{t|t-1})  + {\bf P}_{{\bf n}_t} \Big)^{-1}
\end{align}

and the update for the parameters of the  value function and the error covariance are the same as in Equation (\ref{suppeq:kalman update}) as we prove in Theorem 1.
 
%\newpage
\subsection{A Bayesian approach: MAP estimator}
We adopt the Bayesian approach in which we are interested in finding the optimal set of parameters $\boldsymbol{\theta}_t$ that maximizes the posterior distribution of the parameters given the observations we have gathered up to time $t$, denoted as the  $y_{1:t}$.

According to Bayes rule, the posterior distribution is defined as: 
\[p(\boldsymbol{\theta}_t|y_{1:t}) = \frac{p(y_{1:t}|\boldsymbol{\theta}_t) p(\boldsymbol{\theta}_t)}{p(y_{1:t})}\]
where $p(y_{1:t}|\boldsymbol{\theta})$ is the {\it likelihood} of the observations given the parameters $\boldsymbol{\theta}$ and $p(\boldsymbol{\theta})$ is the {\it prior} distribution over $\boldsymbol{\theta}$.	We will expend the term of the posterior \cite{van2004sigma}:
\begin{align}
\nonumber p(\boldsymbol{\theta}_t|y_{1:t}) & = \frac{p(y_{1:t}|\boldsymbol{\theta}_t)p(\boldsymbol{\theta}_t)}{p(y_{1:t})} \\
& = \frac{p(y_t|y_{1:t-1},\boldsymbol{\theta}_t) p(y_{1:t-1}|\boldsymbol{\theta}_t)p(\boldsymbol{\theta}_t)}{p(y_{1:t})} \label{eq:PosteriorWithNoise1} \\ 
& = \frac{p(y_t|\boldsymbol{\theta}_t) p(y_{1:t-1}|\boldsymbol{\theta}_t)p(\boldsymbol{\theta}_t)  }{p(y_{1:t})} \cdot \frac{p(y_{1:t-1})}{p(y_{1:t-1})}  \label{eq:PosteriorWithNoise2}\\ 
& = \frac{p(y_t|\boldsymbol{\theta}_t)p(\boldsymbol{\theta}_t|y_{1:t-1}) p(y_{1:t-1}) }{p(y_{1:t})} \label{eq:PosteriorWithNoise3}
% & \overset{(d)}{=} \frac{p(r_i|{\bf x}_i)p({\bf x}_i|r_{1:i-1}) p(r_{1:i-1}) }{p(r_{1:i})}
\end{align}
The transition in (\ref{eq:PosteriorWithNoise1}) is according to the conditional probability:
\begin{align*}
p(y_{1:t}|\boldsymbol{\theta}_t) & = p(y_t, y_{1:t-1}|\boldsymbol{\theta}_t)\\
& = \frac{p(y_t, y_{1:t-1},\boldsymbol{\theta}_t)}{p(\boldsymbol{\theta}_t)} \\
& = \frac{p(y_{1:t-1},\boldsymbol{\theta}_t)p(y_t|y_{1:t-1},\boldsymbol{\theta}_t)}{p(\boldsymbol{\theta}_t)} \\
& = p(y_{1:t-1}|\boldsymbol{\theta}_t)p(y_t|y_{1:t-1},\boldsymbol{\theta}_t) 
\end{align*}
The transition in (\ref{eq:PosteriorWithNoise2}) is according to the conditional independence: $p(y_t|y_{1:t-1},\boldsymbol{\theta}_t) = p(y_t|\boldsymbol{\theta}_t)$, and we multiplied the numerator and the dominator by $p(y_{1:t-1})$.\\
The transition in (\ref{eq:PosteriorWithNoise3}) is according to Bayes rule: $p(\boldsymbol{\theta}_t|y_{1:t-1}) = \frac{p(y_{1:t-1}|\boldsymbol{\theta}_t)p(\boldsymbol{\theta}_t)}{p(y_{1:t-1})}$.

The MAP estimator for $\boldsymbol{\theta}_t$ is the one who maximizes the posterior distribution described in (\ref{eq:PosteriorWithNoise3}).
\begin{align}
\nonumber \boldsymbol{\theta}_{t}^{MAP}  = & \arg\max_{\boldsymbol{\theta}_t} && \big\{ p(\boldsymbol{\theta}_t|y_{1:t}) \big\}\\
\nonumber = &\arg\max_{\boldsymbol{\theta}_t} && \Big\{  \frac{p(y_t|\boldsymbol{\theta}_t)p(\boldsymbol{\theta}_t|y_{1:t-1}) p(y_{1:t-1}) }{p(y_{1:t})} \Big\}\\
\nonumber = &\arg\max_{\boldsymbol{\theta}_t} && \big\{  p(y_t|\boldsymbol{\theta}_t)p(\boldsymbol{\theta}_t|y_{1:t-1})  \big\}\\
\nonumber = &\arg\max_{\boldsymbol{\theta}_t} && \big\{ \log \big(  p(y_t|\boldsymbol{\theta}_t)p(\boldsymbol{\theta}_t|y_{1:t-1})  \big) \big\} \\
\nonumber = &\arg\max_{\boldsymbol{\theta}_t} && \big\{ \log   p(y_t|\boldsymbol{\theta}_t) + \log p(\boldsymbol{\theta}_t|y_{1:t-1})   \big\} \\
= &\arg\min_{\boldsymbol{\theta}_t} && \big\{ - \log   p(y_t|\boldsymbol{\theta}_t) - \log p(\boldsymbol{\theta}_t|y_{1:t-1})   \big\} \label{eq:MAPln}
\end{align}
In (\ref{eq:MAPln}) We used the derivation in ($\ref{eq:PosteriorWithNoise3}$) and the fact that the argument which maximizes the posterior is the same as the argument that maximizes the $\log (\cdot)$ of the posterior. In addition this argument also minimizes the negative $\log(\cdot)$. 

We will replace here $y_t = y({\bf u}_t)$ and receive:
\begin{equation}
\label{eq:MAPln2}
\boldsymbol{\theta}_{t}^{MAP} = \arg\min_{\boldsymbol{\theta}_t}  \big\{ - \log   p(y({\bf u}_t)|\boldsymbol{\theta}_t) - \log p(\boldsymbol{\theta}_t|y_{1:t-1})   \big\}
\end{equation}

In order to solve (\ref{eq:MAPln2}), we consider the EKF formulation for the value function parameters.

\subsection{Gaussian assumptions}
When estimating using the EKF, it is common to make the following assumptions regarding the likelihood and the posterior in Equation (\ref{eq:MAPln2}):
\begin{assumption}
	\label{suppAs:ConditionalIndependance}
	The likelihood $p(y({\bf u}_t)|\boldsymbol{\theta}_t)$ is assumed to be Gaussian: 
	$y({\bf u}_t)|\boldsymbol{\theta}_t \sim \mathcal{N}(h({\bf u}_t,  \boldsymbol{\theta}_t), {\bf P}_{{\bf n}_t})$. 
\end{assumption}

\begin{assumption}
	\label{suppAs:GaussianPosterior}
	The posterior distribution $p(\boldsymbol{\theta}_t|y_{1:t-1})$ is assumed to be Gaussian: $\boldsymbol{\theta}_t|y_{1:t-1} \sim \mathcal{N}(\boldsymbol{\hat{\theta}}_{t|t-1},{\bf P}_{t|t-1})$.
\end{assumption}
Following are the calculations for the means and covariances in Assumptions \ref{suppAs:ConditionalIndependance} and \ref{suppAs:GaussianPosterior}. For the likelihood $p(y({\bf u}_t)|\boldsymbol{\theta}_t)$:
\begin{align}
\label{suppeq:expected_y}
\nonumber \mathbb{E} \big[y(o_t)|\boldsymbol{\theta}_t \big] & \underbrace{=}_{(\ref{suppeq:Extended-Kalman})} \mathbb{E} \big[h({\bf u}_t; \boldsymbol{\theta}_t) + {\bf n}_t|\boldsymbol{\theta}_t \big]\\
\nonumber &  = \mathbb{E} \big[h({\bf u}_t; \boldsymbol{\theta}_t) |\boldsymbol{\theta}_t \big] + \underbrace{\mathbb{E} \big[ {\bf n}_t |\boldsymbol{\theta}_t\big]}_{={\bf 0}}\\
& = h({\bf u}_t; \boldsymbol{\theta}_t)
\end{align}
Let's evaluate the following:
\begin{align}
\label{suppeq:diff_y}
\nonumber y({\bf u}_t) - \mathbb{E} \big[y({\bf u}_t)|\boldsymbol{\theta}_t \big]  & \underbrace{=}_{(\ref{suppeq:Extended-Kalman}) + (\ref{suppeq:expected_y})} h({\bf u}_t; \boldsymbol{\theta}_t) + {\bf n}_t - h({\bf u}_t; \boldsymbol{\theta}_t)\\
& = {\bf n}_t
\end{align}
\begin{align*}
Cov(y({\bf u}_t)|\boldsymbol{\theta}_t) & \triangleq 
\mathbb{E} \big[\big(y({\bf u}_t) - \mathbb{E} \big[y({\bf u}_t)|\boldsymbol{\theta}_t \big] \big)\\
& \quad \big(y({\bf u}_t) - \mathbb{E} \big[y({\bf u}_t)|\boldsymbol{\theta}_t \big] \big)^\top|\boldsymbol{\theta}_t \big] \\
& \underbrace{=}_{(\ref{suppeq:diff_y})} \mathbb{E} \big[  {\bf n}_t {\bf n}_t^\top  |\boldsymbol{\theta}_t \big]\\
& = {\bf P}_{{\bf n}_t}
\end{align*}
For the posterior $p(\boldsymbol{\theta}_t|y_{1:t-1})$:  
$\mathbb{E}_{\boldsymbol{\theta}_t} \big[ \boldsymbol{\theta}_t|y_{1:t-1}\big] \underbrace{=}_{(\ref{suppeq:weight_estimation})} \boldsymbol{\hat{\theta}}_{t|t-1}$.
\begin{align*}
Cov(\boldsymbol{\theta}_t|y_{1:t-1}) \triangleq  
& \mathbb{E}_{\boldsymbol{\theta}_t} \big[ \big(\boldsymbol{\theta}_t - \boldsymbol{\hat{\theta}}_{t|t-1} \big) \big(\boldsymbol{\theta}_t - \boldsymbol{\hat{\theta}}_{t|t-1} \big)^\top |y_{1:t-1} \big]\\
& = \mathbb{E}_{\boldsymbol{\theta}_t} \big[ \boldsymbol{\tilde{\theta}}_{t|t-1} \boldsymbol{\tilde{\theta}}_{t|t-1}^\top |y_{1:t-1} \big] \underbrace{=}_{(\ref{suppeq:error_covariance})} {\bf P}_{t|t-1}
\end{align*}

\subsection{Proof of Theorem 1}
Based on the Gaussian assumptions, we can derive the following Theorem:
\begin{theorem}
	\label{supp:theorem1}
	Under Assumptions \ref{suppAs:ConditionalIndependance} and \ref{suppAs:GaussianPosterior}, $\boldsymbol{\hat{\theta}}^{\text{EKF}}_{t|t}$  (\ref{suppeq:kalman update}) minimizes at each time step $t$ the following regularized objective function:
	\begin{align}
	\label{suppeq:objective_EKF}
	\nonumber L^{\text{EKF}}_t(\boldsymbol{\theta}_t) & =  \frac{1}{2}  \big(\delta ({\bf u}_t; \boldsymbol{\theta}_t) \big)^\top {\bf P}_{{\bf n}_t}^{-1} \big(\delta ({\bf u}_t; \boldsymbol{\theta}_t) \big) \\
	& +  \frac{1}{2}(\boldsymbol{\theta}_t - \boldsymbol{\hat{\theta}}_{t|t-1})^\top {\bf P}_{t|t-1}^{-1} (\boldsymbol{\theta}_t - \boldsymbol{\hat{\theta}}_{t|t-1}),
	\end{align}
	where $\boldsymbol{\hat{\theta}}^{\text{EKF}}_{t|t} \in \arg\min_{\boldsymbol{\theta}_t} L^{\text{EKF}}_t(\boldsymbol{\theta}_t)$.
\end{theorem}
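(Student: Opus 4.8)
The plan is to establish the theorem in two stages: first recast the minimization of $L_t^{\text{EKF}}$ as the MAP estimation problem already set up in (\ref{eq:MAPln2}), then show that the EKF update (\ref{suppeq:kalman update}) is precisely its minimizer. First I would substitute the two Gaussian densities into the MAP objective. By Assumption \ref{suppAs:ConditionalIndependance}, the negative log-likelihood $-\log p(y(\mathbf{u}_t)|\boldsymbol{\theta}_t)$ equals $\frac{1}{2}\big(y(\mathbf{u}_t) - h(\mathbf{u}_t;\boldsymbol{\theta}_t)\big)^\top \mathbf{P}_{\mathbf{n}_t}^{-1}\big(y(\mathbf{u}_t) - h(\mathbf{u}_t;\boldsymbol{\theta}_t)\big)$ plus a constant independent of $\boldsymbol{\theta}_t$; by Assumption \ref{suppAs:GaussianPosterior}, the negative log-posterior $-\log p(\boldsymbol{\theta}_t|y_{1:t-1})$ equals $\frac{1}{2}(\boldsymbol{\theta}_t - \boldsymbol{\hat{\theta}}_{t|t-1})^\top \mathbf{P}_{t|t-1}^{-1}(\boldsymbol{\theta}_t - \boldsymbol{\hat{\theta}}_{t|t-1})$ plus a constant. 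Recognizing $\delta(\mathbf{u}_t;\boldsymbol{\theta}_t) = y(\mathbf{u}_t) - h(\mathbf{u}_t;\boldsymbol{\theta}_t)$ and discarding the $\boldsymbol{\theta}_t$-independent constants, the objective (\ref{eq:MAPln2}) becomes exactly $L_t^{\text{EKF}}(\boldsymbol{\theta}_t)$, so $\boldsymbol{\theta}_t^{\text{MAP}} \in \arg\min_{\boldsymbol{\theta}_t} L_t^{\text{EKF}}(\boldsymbol{\theta}_t)$, and it remains to identify this minimizer with $\boldsymbol{\hat{\theta}}^{\text{EKF}}_{t|t}$.

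Next I would linearize $h$ about $\boldsymbol{\hat{\theta}}_{t|t-1}$ via the first-order Taylor expansion (\ref{suppeq:Linearization}), writing $\delta(\mathbf{u}_t;\boldsymbol{\theta}_t) = \big(y(\mathbf{u}_t) - h(\mathbf{u}_t;\boldsymbol{\hat{\theta}}_{t|t-1})\big) - \nabla_{\boldsymbol{\theta}_t} h(\mathbf{u}_t;\boldsymbol{\hat{\theta}}_{t|t-1})^\top(\boldsymbol{\theta}_t - \boldsymbol{\hat{\theta}}_{t|t-1})$. Under this linearization $L_t^{\text{EKF}}$ is a strictly convex quadratic in $\boldsymbol{\theta}_t$, so its unique minimizer is found by setting the gradient to zero. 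Abbreviating the innovation as $\Delta y \triangleq y(\mathbf{u}_t) - h(\mathbf{u}_t;\boldsymbol{\hat{\theta}}_{t|t-1})$ and the Jacobian as $\nabla h \triangleq \nabla_{\boldsymbol{\theta}_t} h(\mathbf{u}_t;\boldsymbol{\hat{\theta}}_{t|t-1})$, differentiation gives the stationarity condition $\big(\nabla h\, \mathbf{P}_{\mathbf{n}_t}^{-1} \nabla h^\top + \mathbf{P}_{t|t-1}^{-1}\big)(\boldsymbol{\theta}_t - \boldsymbol{\hat{\theta}}_{t|t-1}) = \nabla h\, \mathbf{P}_{\mathbf{n}_t}^{-1} \Delta y$, whence the information-form solution $\boldsymbol{\theta}_t = \boldsymbol{\hat{\theta}}_{t|t-1} + \big(\nabla h\, \mathbf{P}_{\mathbf{n}_t}^{-1} \nabla h^\top + \mathbf{P}_{t|t-1}^{-1}\big)^{-1} \nabla h\, \mathbf{P}_{\mathbf{n}_t}^{-1} \Delta y$. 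Positive-definiteness of $\mathbf{P}_{t|t-1}^{-1}$ makes the Hessian $\nabla h\, \mathbf{P}_{\mathbf{n}_t}^{-1} \nabla h^\top + \mathbf{P}_{t|t-1}^{-1}$ invertible and confirms this critical point is the global minimizer.

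The final and most delicate step is to reconcile this information-form gain with the Kalman gain $\mathbf{K}_t$ of (\ref{suppeq:kalman_gain}). The key identity is $\big(\nabla h\, \mathbf{P}_{\mathbf{n}_t}^{-1} \nabla h^\top + \mathbf{P}_{t|t-1}^{-1}\big)^{-1} \nabla h\, \mathbf{P}_{\mathbf{n}_t}^{-1} = \mathbf{P}_{t|t-1} \nabla h \big(\nabla h^\top \mathbf{P}_{t|t-1} \nabla h + \mathbf{P}_{\mathbf{n}_t}\big)^{-1} = \mathbf{K}_t$, which I would verify by the Woodbury matrix inversion lemma, or more directly by left-multiplying the claimed equality by $\big(\nabla h\, \mathbf{P}_{\mathbf{n}_t}^{-1} \nabla h^\top + \mathbf{P}_{t|t-1}^{-1}\big)$ and right-multiplying by $\big(\nabla h^\top \mathbf{P}_{t|t-1}\nabla h + \mathbf{P}_{\mathbf{n}_t}\big)$ and checking both sides collapse to $\nabla h$. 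With this identity the minimizer becomes $\boldsymbol{\theta}_t = \boldsymbol{\hat{\theta}}_{t|t-1} + \mathbf{K}_t \Delta y$, which is exactly $\boldsymbol{\hat{\theta}}^{\text{EKF}}_{t|t}$ in (\ref{suppeq:kalman update}). I expect this gain-equivalence step to be the main obstacle, since it is where the argument hinges on a nontrivial matrix identity; a secondary subtlety worth flagging is that the equivalence is exact only for the linearized model, so for genuinely nonlinear $h$ the EKF update minimizes the Taylor-approximated objective rather than $L_t^{\text{EKF}}$ itself.
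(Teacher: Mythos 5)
Your proposal is correct and follows essentially the same route as the paper's proof: recasting $L_t^{\text{EKF}}$ as the MAP objective under the two Gaussian assumptions, linearizing $h$ about $\boldsymbol{\hat{\theta}}_{t|t-1}$, solving the stationarity condition in information form, and converting the information-form gain into the Kalman gain ${\bf K}_t$ via the same matrix identity the paper verifies in its derivation of (\ref{suppeq:Kt_derivation}). Your closing caveat---that the identification is exact only for the linearized observation model---is a fair observation that the paper leaves implicit, but it does not change the substance of the argument.
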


\begin{proof}
	We solve the minimization problem in (\ref{eq:MAPln2}) by substituting the Gaussian Assumptions \ref{suppAs:ConditionalIndependance} and \ref{suppAs:GaussianPosterior}. We show that this minimization problem is equivalent to minimize the objective function $L_t^{\text{EKF}}$ in Theorem \ref{supp:theorem1}.
	{ \small
	\begin{align*}
	\nonumber & \boldsymbol{\hat{\theta}}_{t|t}^{\text{MAP}}  = \arg\min_{\boldsymbol{\theta}_t}  \big\{  - \log \Big( p(y({\bf u}_t)|\boldsymbol{\theta}_t) \Big) - \log \Big( p(\boldsymbol{\theta}_t|y_{1:t-1}) \Big) \big\}\\
	\nonumber & = \arg\min_{\boldsymbol{\theta}_t}  \Big\{ -\log \bigg( \frac{1}{(2\pi)^{N/2} |{\bf P}_{{\bf n}_t}|^{1/2} } \\
	\nonumber & \quad \exp \Big( - \frac{1}{2} \big( y({\bf u}_t) - h({\bf u}_t; \boldsymbol{\theta}_t) \big)^\top  {\bf P}_{{\bf n}_t}^{-1} \big( y({\bf u}_t) - h({\bf u}_t; \boldsymbol{\theta}_t) \big) \Big) \bigg) \\
	\nonumber& -\log \bigg( \frac{1}{(2 \pi)^{d/2} |{\bf P}_{t|t-1}|^{1/2}}  \\
	\nonumber & \quad \exp \Big( - \frac{1}{2} (\boldsymbol{\theta}_t - \boldsymbol{\hat{\theta}}_{t|t-1})^\top  {\bf P}_{t|t-1}^{-1} (\boldsymbol{\theta}_t -  \boldsymbol{\hat{\theta}}_{t|t-1})  \Big) \bigg)  \\
	\nonumber & = \arg\min_{\boldsymbol{\theta}_i}  \Big\{ \frac{1}{2}  \big( y({\bf u}_t) - h({\bf u}_t; \boldsymbol{\theta}_t) \big)^\top  {\bf P}_{{\bf n}_t}^{-1} \big( y({\bf u}_t) - h({\bf u}_t; \boldsymbol{\theta}_t) \big) \\
	\nonumber & \quad  -\log \Big( \frac{1}{(2\pi)^{N/2} |{\bf P}_{{\bf n}_t}|^{1/2} } \Big)  + \frac{1}{2} (\boldsymbol{\theta}_t - \boldsymbol{\hat{\theta}}_{t|t-1})^\top  {\bf P}_{t|t-1}^{-1} (\boldsymbol{\theta}_t -  \boldsymbol{\hat{\theta}}_{t|t-1})\\
	\nonumber & \quad  -\log \Big( \frac{1}{(2 \pi)^{d/2} |{\bf P}_{t|t-1}|^{1/2}}  \Big) \Big\} \\
	\nonumber & = \arg\min_{\boldsymbol{\theta}_i}  \Big\{ \frac{1}{2} \big( y({\bf u}_t) - h({\bf u}_t; \boldsymbol{\theta}_t) \big)^\top  {\bf P}_{{\bf n}_t}^{-1} \big( y({\bf u}_t) - h({\bf u}_t; \boldsymbol{\theta}_t) \big) \\
	\nonumber & + \frac{1}{2} (\boldsymbol{\theta}_t - \boldsymbol{\hat{\theta}}_{t|t-1})^\top  {\bf P}_{t|t-1}^{-1} (\boldsymbol{\theta}_t -  \boldsymbol{\hat{\theta}}_{t|t-1}) \Big\} 
	\end{align*}
	}
	where $|\cdot|$ denotes the determinant. We receive the following objective function:
	\begin{align}
	\label{eq:Objective}
	\nonumber L_t(\boldsymbol{\theta}_t) & =  \frac{1}{2} \big( y({\bf u}_t) - h({\bf u}_t; \boldsymbol{\theta}_t) \big)^\top  {\bf P}_{{\bf n}_t}^{-1} \big( y({\bf u}_t) - h({\bf u}_t; \boldsymbol{\theta}_t) \big) \\
	& + \frac{1}{2} (\boldsymbol{\theta}_t - \boldsymbol{\hat{\theta}}_{t|t-1})^\top  {\bf P}_{t|t-1}^{-1} (\boldsymbol{\theta}_t -  \boldsymbol{\hat{\theta}}_{t|t-1})  
	\end{align}
	
	Which is exactly the objective function (\ref{suppeq:objective_EKF}) in Theorem \ref{supp:theorem1}, with: $\delta ({\bf u}_t; \boldsymbol{\theta}_t) = y({\bf u}_t) - h({\bf u}_t; \boldsymbol{\theta}_t)$. To minimize this objective function we take the derivative of $L^{\text{EKF}}_t(\boldsymbol{\theta}_t)$ with respect to $\boldsymbol{\theta}_t$:
	\begin{align*}
	\nabla_{\boldsymbol{\theta}_t} L^{\text{EKF}}_t(\boldsymbol{\theta}_t) & = -     \nabla_{\boldsymbol{\theta}_t}  h({\bf u}_t,  \boldsymbol{\theta}_t) {\bf P}_{{\bf n}_t}^{-1} \big( y({\bf u}_t) - h({\bf u}_t; \boldsymbol{\theta}_t) \big) \\
	& \quad + {\bf P}_{t|t-1}^{-1} (\boldsymbol{\theta}_t - \boldsymbol{\hat{\theta}}_{t|t-1})  = 0
	\end{align*}
	We use the linearization of the value function in Equation (\ref{suppeq:Linearization}):
	\begin{align*}
	& {\bf P}_{t|t-1}^{-1} (\boldsymbol{\theta}_t - \boldsymbol{\hat{\theta}})   =  \nabla_{\boldsymbol{\theta}_t}  \big( h({\bf u}_t; \boldsymbol{\hat{\theta}}) +   \nabla_{\boldsymbol{\theta}_t} h({\bf u}_t; \boldsymbol{\hat{\theta}})^\top \big( \boldsymbol{\theta}_{t} - \boldsymbol{\hat{\theta}} \big)\big)\\
	& {\bf P}_{{\bf n}_t}^{-1} \Big( y({\bf u}_t) -  h({\bf u}_t; \boldsymbol{\hat{\theta}}) -   \nabla_{\boldsymbol{\theta}_t} h({\bf u}_t; \boldsymbol{\hat{\theta}})^\top \big( \boldsymbol{\theta}_{t} - \boldsymbol{\hat{\theta}} \big)  \Big)    \\
	& =  \nabla_{\boldsymbol{\theta}_t} h({\bf u}_t; \boldsymbol{\hat{\theta}}) {\bf P}_{{\bf n}_t}^{-1} \Big( y({\bf u}_t) -  h({\bf u}_t; \boldsymbol{\hat{\theta}}) \Big)    \\
	& - \nabla_{\boldsymbol{\theta}_t} h({\bf u}_t; \boldsymbol{\hat{\theta}}) {\bf P}_{{\bf n}_t}^{-1} \nabla_{\boldsymbol{\theta}_t} h({\bf u}_t; \boldsymbol{\hat{\theta}})^\top  \big( \boldsymbol{\theta}_{t} - \boldsymbol{\hat{\theta}} \big)
	 \end{align*}
	 We receive that:
	 \begin{align*}
	 & \Big( {\bf P}_{t|t-1}^{-1} + \nabla_{\boldsymbol{\theta}_t} h({\bf u}_t; \boldsymbol{\hat{\theta}}) {\bf P}_{{\bf n}_t}^{-1} \nabla_{\boldsymbol{\theta}_t} h({\bf u}_t; \boldsymbol{\hat{\theta}})^\top\Big)  (\boldsymbol{\theta}_t - \boldsymbol{\hat{\theta}})\\
	 & =  \nabla_{\boldsymbol{\theta}_t} h({\bf u}_t; \boldsymbol{\hat{\theta}}) {\bf P}_{{\bf n}_t}^{-1} \Big( y({\bf u}_t) -  h({\bf u}_t; \boldsymbol{\hat{\theta}}) \Big)
	 \end{align*}
	 and finally:
	  \begin{align}
	  \label{suppeq:weight_MAP}
	  \nonumber \boldsymbol{\theta}_t & = \boldsymbol{\hat{\theta}} +  \Big( {\bf P}_{t|t-1}^{-1} + \nabla_{\boldsymbol{\theta}_t} h({\bf u}_t; \boldsymbol{\hat{\theta}}) {\bf P}_{{\bf n}_t}^{-1} \nabla_{\boldsymbol{\theta}_t} h({\bf u}_t; \boldsymbol{\hat{\theta}})^\top\Big)^{-1}  \\
	  &  \nabla_{\boldsymbol{\theta}_t} h({\bf u}_t; \boldsymbol{\hat{\theta}}) {\bf P}_{{\bf n}_t}^{-1} \Big( y({\bf u}_t) -  h({\bf u}_t; \boldsymbol{\hat{\theta}}) \Big)
	  \end{align}
	  For simplicity we denote:
	  $\nabla {\bf h} = \nabla_{\boldsymbol{\theta}_t} h({\bf u}_t; \boldsymbol{\hat{\theta}})$. 
	We will now simplify the following term:
	{\small 
	\begin{align}
	\label{suppeq:Kt_derivation}
	\nonumber & \Big( {\bf P}_{t|t-1}^{-1} + \nabla {\bf h} {\bf P}_{{\bf n}_t}^{-1} \nabla {\bf h}^\top\Big)^{-1} \nabla {\bf h} {\bf P}_{{\bf n}_t}^{-1}\\
	\nonumber & = \Big( {\bf P}_{t|t-1}^{-1} + \nabla {\bf h} {\bf P}_{{\bf n}_t}^{-1} \nabla {\bf h}^\top\Big)^{-1} \nabla {\bf h} {\bf P}_{{\bf n}_t}^{-1} \Big( \nabla {\bf h}^\top {\bf P}_{t|t-1} \nabla {\bf h} + {\bf P}_{{\bf n}_t}\Big)\\
	\nonumber & \Big( \nabla {\bf h}^\top {\bf P}_{t|t-1} \nabla {\bf h} + {\bf P}_{{\bf n}_t}\Big)^{-1}\\
	\nonumber & = \Big( {\bf P}_{t|t-1}^{-1} + \nabla {\bf h} {\bf P}_{{\bf n}_t}^{-1} \nabla {\bf h}^\top\Big)^{-1}  \Big( \nabla {\bf h} {\bf P}_{{\bf n}_t}^{-1} \nabla {\bf h}^\top {\bf P}_{t|t-1} \nabla {\bf h} \\
	\nonumber & + \nabla {\bf h} {\bf P}_{{\bf n}_t}^{-1} {\bf P}_{{\bf n}_t}\Big) \Big( \nabla {\bf h}^\top {\bf P}_{t|t-1} \nabla {\bf h} + {\bf P}_{{\bf n}_t}\Big)^{-1}\\
	\nonumber & = \Big( {\bf P}_{t|t-1}^{-1} + \nabla {\bf h} {\bf P}_{{\bf n}_t}^{-1} \nabla {\bf h}^\top\Big)^{-1}  \Big( \nabla {\bf h} {\bf P}_{{\bf n}_t}^{-1} \nabla {\bf h}^\top  \\
	\nonumber & +  {\bf P}_{t|t-1}^{-1} \Big) {\bf P}_{t|t-1} \nabla {\bf h} \Big( \nabla {\bf h}^\top {\bf P}_{t|t-1} \nabla {\bf h} + {\bf P}_{{\bf n}_t}\Big)^{-1}\\
	\nonumber & = {\bf P}_{t|t-1} \nabla {\bf h} \Big( \nabla {\bf h}^\top {\bf P}_{t|t-1} \nabla {\bf h} + {\bf P}_{{\bf n}_t}\Big)^{-1}\\
	\nonumber & \underbrace{=}_{(\ref{suppeq:covariance_weights_innovation})+(\ref{suppeq:covariance_innovation})} {\bf P}_{\boldsymbol{\tilde{\theta}}_t,{\bf \tilde{y}}_{t}}  {\bf P}_{{\bf \tilde{y}}_t}^{-1}\\
	& \underbrace{=}_{(\ref{suppeq:kalman_gain})} {\bf K}_t
	\end{align}
	}
	Substituting this results in Equation (\ref{suppeq:weight_MAP}), we receive the EKF update for the parameters:
	\begin{align}
	\label{suppeq:EKF_weight}
	\boldsymbol{\hat{\theta}}^{\text{EKF}}_{t|t} & = \boldsymbol{\hat{\theta}}_{t|t-1} +  {\bf K}_t \big( y({\bf u}_t) - h({\bf u}_t; \boldsymbol{\hat{\theta}}_{t|t-1}) \big) 
	\end{align}
	which is exactly as in Equation (\ref{suppeq:kalman update}).
	
	We will now develop the term $\Big( {\bf P}_{t|t-1}^{-1} +\nabla {\bf h} {\bf P}_{{\bf n}_t}^{-1} \nabla {\bf h}^\top\Big)^{-1}$ that appears in (\ref{suppeq:weight_MAP}) by using the matrix inversion lemma:
	\begin{equation}
	\label{suppeq:MatrixInversionLemma}
	({\bf B}^{-1} + {\bf C}{\bf D}^{-1}{\bf C}^\top)^{-1} = {\bf B} - {\bf BC}({\bf D}+ {\bf C}^\top {\bf BC})^{-1} {\bf C}^\top{\bf B}
	\end{equation}
	where ${\bf B}$ is a square symmetric positive-definite (and hence invertible) matrix. For this purpose we assume that the error covariance matrix of $\boldsymbol{\theta}_t$, ${\bf P}_{t|t-1}$, is symmetric and positive-definite.
	{\small 
	\begin{align}
	\nonumber & \Big( {\bf P}_{t|t-1}^{-1} + \nabla {\bf h} {\bf P}_{{\bf n}_t}^{-1} \nabla {\bf h}^\top\Big)^{-1} \\
	\nonumber & \underbrace{=}_{(\ref{suppeq:MatrixInversionLemma})} {\bf P}_{t|t-1} - {\bf P}_{t|t-1} \nabla {\bf h}({\bf P}_{{\bf n}_t} + \nabla {\bf h}^\top {\bf P}_{t|t-1} \nabla {\bf h})^{-1}\nabla {\bf h}^\top {\bf P}_{t|t-1} \\
	\nonumber & \underbrace{=}_{(\ref{suppeq:Kt_derivation})} {\bf P}_{t|t-1} - {\bf K}_t \nabla {\bf h}^\top {\bf P}_{t|t-1}\\
	\nonumber & \underbrace{=}_{(\ref{suppeq:covariance_weights_innovation})} {\bf P}_{t|t-1} - {\bf K}_t {\bf P}_{\boldsymbol{\tilde{\theta}}_t,{\bf \tilde{y}}_{t}}^\top\\
	\nonumber & \underbrace{=}_{(\ref{suppeq:kalman_gain})} {\bf P}_{t|t-1} - {\bf K}_t  {\bf P}_{{\bf \tilde{y}}_t} {\bf K}_t^\top
	\end{align} 
	}
	
	We can write the update of the parameters error covariance as:
	\begin{equation}
	\label{suppeq:error_covariance_update}
	\boxed { {\bf P}_{t|t} = {\bf P}_{t|t-1} - {\bf K}_t  {\bf P}_{{\bf \tilde{y}}_t} {\bf K}_t^\top }
	\end{equation}
	
	We conclude the proof by stating that the optimal parameter $\boldsymbol{\hat{\theta}}_{t|t}^{\text{EKF}}$ in (\ref{suppeq:kalman update}) is the solution to the minimization of the objective function in (\ref{suppeq:objective_EKF}):
	\[\boldsymbol{\hat{\theta}}_{t|t}^{\text{EKF}} \in \arg\min_{\boldsymbol{\theta}_{t}} L_t^{\text{EKF}}(\boldsymbol{\theta}_t)\] 
\end{proof}

\subsection{Proof of Colloraly 1}
\begin{proof}
	If ${\bf P}_{{\bf n}_t}$ is diagonal with diagonal elements $\sigma_i=N$, where $N$ is the number of samples in a batch, then:
	\begin{align*}
	\frac{1}{2}  \delta({\bf u}_t; \boldsymbol{\theta}_t)^\top {\bf P}_{{\bf n}_t}^{-1}  \delta({\bf u}_t; \boldsymbol{\theta}_t) & = \frac{1}{2 N} \sum_{i=1}^N  \delta^2(u_t^i,\boldsymbol{\theta}_t)\\
	& = L^{\text{MLE}}_t (\boldsymbol{\theta}_t)
	\end{align*}
	
	If in addition, ${\bf P}_{0|0} = {\bf 0}$, and ${\bf P}_{{\bf v}_t} = {\bf 0}$ then the the initial error covariance matrix does not change and $L^{\text{EKF}}_t (\boldsymbol{\theta}_t) = L^{\text{MLE}}_t (\boldsymbol{\theta}_t)$ for each $t$.
\end{proof}

\subsection{Proof of Theorem 2}
First let's define the distributions of interest. We adopt the notation from \cite{martens2014new}. Assume the inputs $u$ are drawn independently from a target distribution $Q_{u}$ with density function $q(u)$, and assume the corresponding outputs $y$ are drawn from a conditional target distribution $Q_{y|u}$ with density function $q(y|u)$. The target joint distribution is $Q_{u,y}$ whose density is $q(u,y) = q(y|u)q(u)$, and the learned distribution is $P_{u,y}(\boldsymbol{\theta})$, whose density is $p(u,y|\boldsymbol{\theta}) = p(y|u, \boldsymbol{\theta})q(u)$. 

\begin{lemma}
	If ${\bf P}_{{\bf n}_t}$ is diagonal with diagonal elements $\sigma_i=N$, then:
	\[\frac{1}{2}  \delta({\bf u}_t; \boldsymbol{\theta}_t) ^\top  {\bf P}_{{\bf n}_t}^{-1} \delta({\bf u}_t; \boldsymbol{\theta}_t) =  C +  N \mathbb{E}_{\hat{Q}_u} [D_{\text{KL}}(\hat{Q}_{y|u} || P_{y|u}(\boldsymbol{\theta}))]\]
\end{lemma}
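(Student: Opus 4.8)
The plan is to run the standard chain that connects an expected conditional KL divergence to a negative log-likelihood, specialize it to the Gaussian model of Assumption \ref{suppAs:ConditionalIndependance}, and then match the resulting normalization constant against $C$. First I would expand the right-hand side by the definition of the KL divergence:
\[
\mathbb{E}_{\hat{Q}_u}[D_{\text{KL}}(\hat{Q}_{y|u}\|P_{y|u}(\boldsymbol{\theta}))] = \mathbb{E}_{\hat{Q}_{u,y}}[\log \hat{q}(y|u)] - \mathbb{E}_{\hat{Q}_{u,y}}[\log p(y|u,\boldsymbol{\theta})].
\]
The key observation is that the empirical conditional target $\hat{Q}_{y|u}$ places all of its mass on the single label $y(u)$ observed for the input $u$, so its (discrete) self-entropy vanishes and the first term is $0$; what survives is the cross-entropy, i.e. the model's negative log-likelihood averaged over the batch. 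Replacing $\mathbb{E}_{\hat{Q}_u}[\cdot]$ by the empirical average over the $N$ batch inputs $\{u_t^i\}_{i=1}^N$ then yields $\mathbb{E}_{\hat{Q}_u}[D_{\text{KL}}] = -\tfrac{1}{N}\sum_{i=1}^N \log p(y(u_t^i)|u_t^i,\boldsymbol{\theta})$.

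Next I would substitute the Gaussian density from Assumption \ref{suppAs:ConditionalIndependance}. For the diagonal choice $\sigma_i = N$ each coordinate is an independent scalar Gaussian $\mathcal{N}(h(u_t^i;\boldsymbol{\theta}), N)$, so $-\log p(y(u_t^i)|u_t^i,\boldsymbol{\theta}) = \tfrac{1}{2}\log(2\pi N) + \tfrac{1}{2N}\delta^2(u_t^i;\boldsymbol{\theta})$. Multiplying the averaged sum by $N$ turns the normalization terms into $\tfrac{N}{2}\log(2\pi N)$ and the residual terms into $\tfrac{1}{2N}\sum_{i=1}^N\delta^2(u_t^i;\boldsymbol{\theta})$, which by the identical algebra used in the proof of Corollary \ref{corollary1} equals exactly $\tfrac{1}{2}\delta({\bf u}_t;\boldsymbol{\theta}_t)^\top {\bf P}_{{\bf n}_t}^{-1}\delta({\bf u}_t;\boldsymbol{\theta}_t)$. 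Rearranging gives $\tfrac{1}{2}\delta^\top {\bf P}_{{\bf n}_t}^{-1}\delta = -\tfrac{N}{2}\log(2\pi N) + N\,\mathbb{E}_{\hat{Q}_u}[D_{\text{KL}}]$.

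Finally I would identify the constant. Since ${\bf P}_{{\bf n}_t}$ is diagonal with $N$ entries all equal to $N$, we have $|{\bf P}_{{\bf n}_t}| = N^N$, hence $-\tfrac{N}{2}\log(2\pi N) = \log\big((2\pi)^{-N/2} N^{-N/2}\big) = \log\big(1/((2\pi)^{N/2}|{\bf P}_{{\bf n}_t}|^{1/2})\big) = C$, which closes the claim. I expect the only genuinely delicate step to be the vanishing of the target-entropy term: the divergence pairs a discrete point-mass target with a continuous Gaussian model, and one must adopt the convention (standard in the natural-gradient literature, e.g. \cite{martens2014new}) under which the target's self-entropy contributes exactly $0$ rather than a divergent differential entropy. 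Once that convention is fixed, everything reduces to the bookkeeping already carried out for Corollary \ref{corollary1}.
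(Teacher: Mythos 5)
Your proposal is correct and follows essentially the same route as the paper's proof: both reduce $\mathbb{E}_{\hat{Q}_u}[D_{\text{KL}}(\hat{Q}_{y|u}\|P_{y|u}(\boldsymbol{\theta}))]$ to the batch-averaged negative log-likelihood via the point-mass convention $\hat{q}(y|u)=1$ (the paper adopts exactly this convention, writing the summand as $1\cdot\log\frac{1}{p(y|u,\boldsymbol{\theta})}$), then substitute the Gaussian likelihood of Assumption \ref{suppAs:ConditionalIndependance} and match the quadratic form and the constant $C$. The only cosmetic difference is that the paper works with the joint $N$-dimensional Gaussian density and factorizes it by independence, whereas you work coordinate-by-coordinate with scalar Gaussians $\mathcal{N}(h(u_t^i;\boldsymbol{\theta}),N)$ and reassemble, explicitly checking $|{\bf P}_{{\bf n}_t}|=N^N$ so that $-\tfrac{N}{2}\log(2\pi N)=C$; the mathematical content is identical.
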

\begin{proof}
	By definition:
	\[D_{\text{KL}}(Q_{u,y} || P_{u,y}(\boldsymbol{\theta})) = \int q(u, y) \log \frac{q(u, y)}{p(u,y|\boldsymbol{\theta})} dudy\]
	This is equivalent to the expected KL divergence over the conditional distributions.:
	\[\mathbb{E}_{Q_u} [D_{\text{KL}}(Q_{y|u} || P_{y|u}(\boldsymbol{\theta}))] \]
	since:
	{\small 
	\begin{align*}
	& \mathbb{E}_{Q_u} [D_{\text{KL}}(Q_{y|u} || P_{y|u}(\boldsymbol{\theta}))] = \int q(u) \int q(y|u) \log \frac{q(y|u)}{p(y|u,\boldsymbol{\theta})} dydu\\
	& = \int q(u, y) \log \frac{q(y|u)q(u)}{p(y|u,\boldsymbol{\theta})q(u)} dudy = D_{\text{KL}}(Q_{u,y} || P_{u,y}(\boldsymbol{\theta}))
	\end{align*}
	}
	Since we don't have access to $Q_u$ we substitute an empirical training distribution $\hat{Q}_u$ for $Q_u$ which is given by a set $\mathcal{S}_u$ of samples from $Q_u$. Then we define:
	\begin{align*}
	\mathbb{E}_{\hat{Q}_u} [D_{\text{KL}}(Q_{y|u} || P_{y|u}(\boldsymbol{\theta}))] & = \frac{1}{|\mathcal{S}|} \sum_{u \in \mathcal{S}_u} D_{\text{KL}}(Q_{y|u} || P_{y|u}(\boldsymbol{\theta}))
	\end{align*}
	In our training setting, we only have access to a single sample $y$ from $Q_{y|u}$ for each $u \in \mathcal{S}_u$, giving an empirical training distribution $\hat{Q}_{y|u}$. Then:
	{\small 
	\begin{align*}
	\mathbb{E}_{\hat{Q}_u} [D_{\text{KL}}(\hat{Q}_{y|u} || P_{y|u}(\boldsymbol{\theta}))] & = \frac{1}{|\mathcal{S}|} \sum_{(u, y) \in \mathcal{S}} 1 \log \frac{1}{p(y|u,\boldsymbol{\theta})}\\
	& = - \frac{1}{|\mathcal{S}|} \sum_{(u, y) \in \mathcal{S}}  \log p(y|u,\boldsymbol{\theta})
	\end{align*}
	}
	since $\hat{q}(y|u) = 1$.
	Now, back to our EKF notations. Assume that the $N$ observations in $y({\bf u}_t)$ are independent, then:
	\[\log p(y({\bf u}_t)|\boldsymbol{\theta}) = \log \Big(\prod_{i=1}^{N} p(y(u_t^i) | \boldsymbol{\theta}) \Big) = \sum_{i=1}^{N} \log p(y|u_t^i, \boldsymbol{\theta})\]
	where we changed the notation: $p(y(u_t^i) | \boldsymbol{\theta}) = p(y|u_t^i, \boldsymbol{\theta})$. Now let's write it explicitly for Gaussian distributions:
	{\small 
	\begin{align*}
	& \log p(y({\bf u}_t)|\boldsymbol{\theta}) = \log \bigg( \frac{1}{(2\pi)^{N/2} |{\bf P}_{{\bf n}_t}|^{1/2} } \\
	\nonumber & \quad \exp \Big( - \frac{1}{2} \big( y({\bf u}_t) - h({\bf u}_t; \boldsymbol{\theta}_t) \big)^\top  {\bf P}_{{\bf n}_t}^{-1} \big( y({\bf u}_t) - h({\bf u}_t; \boldsymbol{\theta}_t) \big) \Big) \bigg)\\
	& = C - \frac{1}{2} \big( y({\bf u}_t) - h({\bf u}_t; \boldsymbol{\theta}_t) \big)^\top  {\bf P}_{{\bf n}_t}^{-1} \big( y({\bf u}_t) - h({\bf u}_t; \boldsymbol{\theta}_t) \big)\\
	& = C - \frac{1}{2}  \delta({\bf u}_t; \boldsymbol{\theta}_t) ^\top  {\bf P}_{{\bf n}_t}^{-1} \delta({\bf u}_t; \boldsymbol{\theta}_t)
	\end{align*}
	} where $C = \log \big(\frac{1}{(2\pi)^{N/2} |{\bf P}_{{\bf n}_t}|^{1/2} } \big)$ is constant with respect to $\boldsymbol{\theta}$.
	Then we have that:
	\begin{align*}
	& \frac{1}{2}  \delta({\bf u}_t; \boldsymbol{\theta}_t) ^\top  {\bf P}_{{\bf n}_t}^{-1} \delta({\bf u}_t; \boldsymbol{\theta}_t) = C - \log p(y({\bf u}_t)|\boldsymbol{\theta})\\
	& = C -  \sum_{i=1}^{N} \log p(y|u_t^i, \boldsymbol{\theta}) = C +  N \mathbb{E}_{\hat{Q}_u} [D_{\text{KL}}(\hat{Q}_{y|u} || P_{y|u}(\boldsymbol{\theta}))]
	\end{align*}
	
	We have that:
	\[\small \boxed{\frac{1}{2}  \delta({\bf u}_t; \boldsymbol{\theta}_t) ^\top  {\bf P}_{{\bf n}_t}^{-1} \delta({\bf u}_t; \boldsymbol{\theta}_t) =  C +  N \mathbb{E}_{\hat{Q}_u} [D_{\text{KL}}(\hat{Q}_{y|u} || P_{y|u}(\boldsymbol{\theta}))]}\]
\end{proof}

\begin{lemma}
	For the empirical Fisher information matrix $\hat{F}$:
	\begin{align*}
	D_{\text{KL}} \big(P_{u,y}(\boldsymbol{\theta} + \Delta \boldsymbol{\theta})|| P_{u,y}(\boldsymbol{\theta}) \big) & = \frac{1}{2}  (\boldsymbol{\theta} - \boldsymbol{\hat{\theta}})^T {\bf \hat{F}} (\boldsymbol{\theta} - \boldsymbol{\hat{\theta}}) \\
	& + \mathcal{O}(\|\Delta \boldsymbol{\theta}\|^3)
	\end{align*}
\end{lemma}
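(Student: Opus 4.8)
The plan is to expand the KL divergence to second order in the perturbation $\Delta\boldsymbol{\theta}$ and identify the resulting quadratic term with the empirical Fisher information matrix, exploiting the standard fact that the KL divergence between a parametric density and an infinitesimally perturbed member of the same family is locally a quadratic form in the Fisher metric. Writing $z=(u,y)$, $\ell(\boldsymbol{\theta}) = \log p(z|\boldsymbol{\theta})$, I would treat $g(\Delta\boldsymbol{\theta}) \triangleq D_{\text{KL}}(P_{u,y}(\boldsymbol{\theta}+\Delta\boldsymbol{\theta}) || P_{u,y}(\boldsymbol{\theta}))$ as a function of $\Delta\boldsymbol{\theta}$ with $\boldsymbol{\theta}$ held fixed as the reference point, and Taylor expand $g$ about $\Delta\boldsymbol{\theta}=0$.

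First I would record the zeroth-order term $g(0) = D_{\text{KL}}(P_{u,y}(\boldsymbol{\theta}) || P_{u,y}(\boldsymbol{\theta})) = 0$. For the first-order term I differentiate under the integral sign (assuming the usual regularity conditions that justify exchanging $\nabla$ and $\int$): the contribution coming from differentiating the log-ratio vanishes because $\int \nabla_{\boldsymbol{\theta}} p(z|\boldsymbol{\theta})\, dz = \nabla_{\boldsymbol{\theta}}\int p(z|\boldsymbol{\theta})\, dz = \nabla_{\boldsymbol{\theta}} 1 = 0$, while the contribution from differentiating the density carries the factor $\log p(z|\boldsymbol{\theta}+\Delta\boldsymbol{\theta}) - \log p(z|\boldsymbol{\theta})$, which is zero at $\Delta\boldsymbol{\theta}=0$; hence $\nabla g(0)=0$. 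The crucial second-order step is to show that the Hessian of $g$ at $\Delta\boldsymbol{\theta}=0$ equals the Fisher information matrix $F(\boldsymbol{\theta}) = \mathbb{E}_{z\sim P(\boldsymbol{\theta})}[\nabla_{\boldsymbol{\theta}}\ell\, \nabla_{\boldsymbol{\theta}}\ell^\top] = -\mathbb{E}_{z\sim P(\boldsymbol{\theta})}[\nabla^2_{\boldsymbol{\theta}}\ell]$, the two forms coinciding by differentiating $\int \nabla_{\boldsymbol{\theta}} p\, dz = 0$ once more. Together these give $g(\Delta\boldsymbol{\theta}) = \tfrac{1}{2}\Delta\boldsymbol{\theta}^\top F(\boldsymbol{\theta})\,\Delta\boldsymbol{\theta} + \mathcal{O}(\|\Delta\boldsymbol{\theta}\|^3)$.

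It then remains to identify $F(\boldsymbol{\theta})$ with the empirical Fisher matrix ${\bf \hat{F}}$. Since $p(u,y|\boldsymbol{\theta}) = p(y|u,\boldsymbol{\theta})\,q(u)$ with $q(u)$ independent of $\boldsymbol{\theta}$, the score $\nabla_{\boldsymbol{\theta}}\ell$ depends only on the conditional factor, so $F(\boldsymbol{\theta}) = \mathbb{E}_{u}\big[\,\mathbb{E}_{y|u}[\nabla_{\boldsymbol{\theta}}\log p(y|u,\boldsymbol{\theta})\,\nabla_{\boldsymbol{\theta}}\log p(y|u,\boldsymbol{\theta})^\top]\,\big]$. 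Replacing the input expectation $\mathbb{E}_{u}$ by the empirical average over $\hat{Q}_u$ and substituting the Gaussian observation model of Assumption \ref{As:ConditionalIndependance2}, the inner conditional Fisher evaluates to $\nabla_{\boldsymbol{\theta}} h(u;\boldsymbol{\theta})\,{\bf P}_{{\bf n}_t}^{-1}\,\nabla_{\boldsymbol{\theta}} h(u;\boldsymbol{\theta})^\top$, which reproduces ${\bf \hat{F}}$; with $\Delta\boldsymbol{\theta} = \boldsymbol{\theta} - \boldsymbol{\hat{\theta}}$ this yields the claimed identity.

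The main obstacle I anticipate is the second-order step. Beyond the routine differentiation-under-the-integral needed to equate the Hessian of $g$ with the Fisher information, one must justify that the curvature of the observation function $h$ (its Hessian in $\boldsymbol{\theta}$) contributes only to the $\mathcal{O}(\|\Delta\boldsymbol{\theta}\|^3)$ remainder under the Gaussian likelihood. This is exactly the Gauss-Newton approximation invoked elsewhere in the paper, and it is what makes ${\bf \hat{F}}$ the outer-product (Gauss-Newton) Fisher rather than the full log-likelihood Hessian; making it precise requires controlling the residual $y - h$ together with the regularity assumptions permitting differentiation under the integral.
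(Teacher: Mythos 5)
Your proposal is correct and takes essentially the same route as the paper's proof: a second-order Taylor expansion of the KL divergence, vanishing of the first-order term by differentiating under the integral (the score has zero mean), identification of the quadratic term with the Fisher information matrix, and replacement of the expectation over inputs by the empirical average over $\hat{Q}_u$. The only cosmetic differences are that you expand about $\Delta \boldsymbol{\theta} = 0$ (Fisher evaluated at $\boldsymbol{\theta}$) while the paper expands the log-density about $\boldsymbol{\theta} + \Delta\boldsymbol{\theta}$ (Fisher at $\boldsymbol{\hat{\theta}}$), a discrepancy absorbed in the $\mathcal{O}(\|\Delta \boldsymbol{\theta}\|^3)$ remainder, and that your final Gaussian evaluation of $\hat{{\bf F}}$ is an extra (correct) step the lemma does not require — indeed your anticipated obstacle dissolves exactly, since the model expectation gives $\mathbb{E}_{y|u,\boldsymbol{\theta}}[y - h(u;\boldsymbol{\theta})] = {\bf 0}$, which kills the curvature terms and makes the conditional Fisher exactly the outer-product form.
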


\begin{proof}
	According to the KL-divergence definition:
	\begin{align*}
	& D_{\text{KL}} \big(P_{u,y}(\boldsymbol{\theta} + \Delta \boldsymbol{\theta})|| P_{u,y}(\boldsymbol{\theta}) \big) \\
	& =  \int p(u,y| \boldsymbol{\theta} + \Delta \boldsymbol{\theta}) \log p(u, y|\boldsymbol{\theta} + \Delta \boldsymbol{\theta}) dudy\\
	& - \int p(u, y| \boldsymbol{\theta} + \Delta \boldsymbol{\theta}) \log p(u,y| \boldsymbol{\theta}) dudy.
	\end{align*}
	
	According to Taylor expansion:
	\begin{align}
	\nonumber \log p(u,y|\boldsymbol{\theta}) &= \log p(u,y | \boldsymbol{\theta} + \Delta \boldsymbol{\theta} ) - {\bf g}^T \Delta \boldsymbol{\theta} + \frac{1}{2} \Delta \boldsymbol{\theta}^T {\bf H} \Delta \boldsymbol{\theta} \\
	& + \mathcal{O}(\|\Delta \boldsymbol{\theta}\|^3)
	\end{align} 
	where ${\bf g}$ is the gradient of $\log p(u,y|\boldsymbol{\theta})$ at the point $\boldsymbol{\theta} + \Delta \boldsymbol{\theta}$:
	\[{\bf g} = \nabla_{\boldsymbol{\theta}} \log p(u,y| \boldsymbol{\theta})_{|\boldsymbol{\theta} + \Delta \boldsymbol{\theta}}.\]
	Note that $p(u,y| \boldsymbol{\theta})=p(y| u, \boldsymbol{\theta} + \Delta \boldsymbol{\theta}) q(u)$. Since $q(u)$ does not depend on $\boldsymbol{\theta}$ then $\nabla_{\boldsymbol{\theta}} \log p(u,y| \boldsymbol{\theta}) = \nabla_{\boldsymbol{\theta}} \log p(y|u, \boldsymbol{\theta})$. Therefore, we can write ${\bf g}$ as:
	\begin{equation*}
		{\bf g} = \nabla_{\boldsymbol{\theta}} \log p(y| u, \boldsymbol{\theta})_{|\boldsymbol{\theta} + \Delta \boldsymbol{\theta}} = \begin{bmatrix}
		\frac{\partial \log p(y|u, \boldsymbol{\theta} + \Delta \boldsymbol{\theta})}{\partial \theta_1} \\ \vdots \\ \frac{\partial \log  p(y|u, \boldsymbol{\theta} + \Delta \boldsymbol{\theta})}{\partial \theta_d}
		\end{bmatrix}
	\end{equation*}
	Similarly, the Hessian ${\bf H}$ can be written as:
	\begin{align*}
		{\bf H} & = \nabla^2_{\boldsymbol{\theta}} \log p(u,y|\boldsymbol{\theta})_{|\boldsymbol{\theta} + \Delta \boldsymbol{\theta}} = \nabla^2_{\boldsymbol{\theta}} \log p(y|u, \boldsymbol{\theta})_{|\boldsymbol{\theta} + \Delta \boldsymbol{\theta}} \\
		& = \begin{bmatrix}
		\frac{\partial^2 \log p(y|u, \boldsymbol{\theta}+ \Delta \boldsymbol{\theta})}{\partial\theta_1^2} & \ldots & \frac{\partial^2 \log p(y|u, \boldsymbol{\theta} + \Delta \boldsymbol{\theta})}{\partial\theta_1 \partial\theta_d} \\ \vdots & \vdots & \vdots \\ \frac{\partial^2 \log p(y|u, \boldsymbol{\theta} + \Delta \boldsymbol{\theta})}{\partial\theta_d \partial\theta_1} & \ldots & \frac{\partial^2 \log p(y|u, \boldsymbol{\theta} + \Delta \boldsymbol{\theta})}{\partial \theta_d^2}
		\end{bmatrix}
	\end{align*}
	We use this Taylor expansion in the KL-divergence term, and use the notation: $\boldsymbol{\hat{\theta}} = \boldsymbol{\theta} + \Delta \boldsymbol{\theta} \quad \rightarrow \boldsymbol{\theta} -  \boldsymbol{\hat{\theta}} = - \Delta \boldsymbol{\theta}$.
	{\small 
	\begin{align*}
	& D_{\text{KL}} \big(P_{u,y}(\boldsymbol{\theta} + \Delta \boldsymbol{\theta})|| P_{u,y}(\boldsymbol{\theta}) \big) \\
	& =  \int p(u,y| \boldsymbol{\hat{\theta}}) \log p(u,y| \boldsymbol{\hat{\theta}}) dudy\\
	& - \int p(u,y| \boldsymbol{\hat{\theta}}) \Big( \log p(u,y|  \boldsymbol{\hat{\theta}} ) - {\bf g}^T \Delta \boldsymbol{\theta} + \frac{1}{2} \Delta \boldsymbol{\theta}^T {\bf H} \Delta \boldsymbol{\theta} \Big)  dudy \\
	& + \mathcal{O}(\|\Delta \boldsymbol{\theta}\|^3)  \\
	& =  \underbrace{\int p(u,y| \boldsymbol{\hat{\theta}}) \log p(u,y| \boldsymbol{\hat{\theta}}) dudy - \int p(u,y| \boldsymbol{\hat{\theta}}) \log p(u,y| \boldsymbol{\hat{\theta}} ) dudy }_{=0} \\
	& + \underbrace{\int p(u,y|\boldsymbol{\hat{\theta}}) \sum_{i=1}^{d} \frac{\partial \log p(y|u, \boldsymbol{\hat{\theta}})}{\partial \theta_i} \Delta \theta_i dudy}_{=0, see (*)} \\
	& \underbrace{- \frac{1}{2} \int  p(u,y| \boldsymbol{\hat{\theta}})  \sum_{i=1}^{d} \sum_{j=1}^{d} \Delta \theta_i \Delta \theta_j \frac{\partial^2 \log p(y|u,\boldsymbol{\hat{\theta}})}{\partial\theta_i \partial\theta_j}  dudy}_{=\frac{1}{2}  \Delta \boldsymbol{\theta}^T {\bf F} \Delta \boldsymbol{\theta}, see (**)} + \mathcal{O}(\|\Delta \boldsymbol{\theta}\|^3)\\
	& = \frac{1}{2}  \Delta \boldsymbol{\theta}^T {\bf F} \Delta \boldsymbol{\theta} + \mathcal{O}(\|\Delta \boldsymbol{\theta}\|^3)\\
	\end{align*}
	}
	We explain (*), according to regularities in the Leibniz integral rule (switching derivation and integral):
	{\small 
	\begin{align*}
	&\int p(u,y| \boldsymbol{\hat{\theta}}) \sum_{i=1}^{d} \frac{\partial \log p(y|u,\boldsymbol{\hat{\theta}})}{\partial \theta_i} \Delta \theta_i dudy \\
	&= \int q(u) \int p(y|u, \boldsymbol{\hat{\theta}}) \sum_{i=1}^{d} \frac{1}{p(y|u, \boldsymbol{\hat{\theta}})} \frac{\partial p(y|u, \boldsymbol{\hat{\theta}})}{\partial \theta_i} \Delta \theta_i dydu \\
	& = \int q(u) \sum_{i=1}^{d} \Delta \theta_i   \underbrace{\frac{\partial}{\partial \theta_i} \underbrace{\int p(y|u,\boldsymbol{\hat{\theta}}) dy}_{=1}}_{=0} du = 0
	\end{align*}
	}
	We explain (**):
	{\small 
	\begin{align*}
	& - \frac{1}{2} \int  p(u,y| \boldsymbol{\hat{\theta}})  \sum_{i=1}^{d} \sum_{j=1}^{d} \Delta \theta_i \Delta \theta_j \frac{\partial^2 \log p(y|u, \boldsymbol{\hat{\theta}})}{\partial\theta_i \partial\theta_j} du dy \\
	& = - \frac{1}{2} \int q(u) \sum_{i=1}^{d} \sum_{j=1}^{d} \Delta \theta_i \Delta \theta_j \cdot \\
	& \int  p(y|u, \boldsymbol{\hat{\theta}})   \frac{\partial }{\partial\theta_i} \Big(\frac{1}{ p(y|u, \boldsymbol{\hat{\theta}})}\frac{\partial p(y|u, \boldsymbol{\hat{\theta}})}{ \partial\theta_j} \Big)  dy du\\
	& = - \frac{1}{2} \int q(u) \sum_{i=1}^{d} \sum_{j=1}^{d} \Delta \theta_i \Delta \theta_j \int  p(y|u, \boldsymbol{\hat{\theta}})  \\
	& \Big( \frac{1}{ p(y|u, \boldsymbol{\hat{\theta}})} \frac{\partial^2 p(y|u, \boldsymbol{\hat{\theta}})}{\partial\theta_i \partial\theta_j} \\
	& - \frac{1}{ p(y|u, \boldsymbol{\hat{\theta}})^2} \frac{\partial p(y|u, \boldsymbol{\hat{\theta}}) }{\partial\theta_i} \frac{\partial p(y|u, \boldsymbol{\hat{\theta}})}{ \partial\theta_j}   \Big) dy du\\
	& = - \frac{1}{2} \int q(u) \sum_{i=1}^{d} \sum_{j=1}^{d} \Delta \theta_i \Delta \theta_j \int  \Big( \frac{\partial^2 p(y|u, \boldsymbol{\hat{\theta}})}{\partial\theta_i \partial\theta_j} \\
	&- p(y|u, \boldsymbol{\hat{\theta}}) \frac{\partial \log p(y|u, \boldsymbol{\hat{\theta}}) }{\partial\theta_i} \frac{\partial \log p(y|u, \boldsymbol{\hat{\theta}})}{ \partial\theta_j} \Big)  dy du\\
	& = - \frac{1}{2} \int q(u) \sum_{i=1}^{d} \sum_{j=1}^{d} \Delta \theta_i \Delta \theta_j \underbrace{\frac{\partial^2}{\partial\theta_i \partial\theta_j} \underbrace{\int p(y|u, \boldsymbol{\hat{\theta}}) dy}_{=1}}_{=0} du \\
	& + \frac{1}{2} \int q(u) \sum_{i=1}^{d} \sum_{j=1}^{d} \Delta \theta_i \Delta \theta_j\\
	& \quad \quad \mathbb{E}_{P_{y|u}(\boldsymbol{\hat{\theta}})} \Big[  \frac{\partial \log p(y|u, \boldsymbol{\hat{\theta}}) }{\partial\theta_i} \frac{\partial \log p(y|u, \boldsymbol{\hat{\theta}})}{ \partial\theta_j} \Big] du\\
	& = \frac{1}{2}  \Delta \boldsymbol{\theta}^T {\bf F} \Delta \boldsymbol{\theta}
	\end{align*}
	}
	where
	\[{\bf F}_{ij} = \mathbb{E}_{Q_{u}} \Bigg[  \mathbb{E}_{P_{y|u}(\boldsymbol{\hat{\theta}})} \Big[  \frac{\partial \log p(y|u, \boldsymbol{\hat{\theta}}) }{\partial\theta_i} \frac{\partial \log p(y|u, \boldsymbol{\hat{\theta}})}{ \partial\theta_j} \Big] \Bigg]\]
	Since we don't have access to $Q_u$ we will use the empirical training distribution $\hat{Q}_u$:
	\[{\bf \hat{F}}_{ij} = \frac{1}{|\mathcal{S}|} \sum_{u \in \mathcal{S}_u}   \mathbb{E}_{P_{y|u}(\boldsymbol{\hat{\theta}})} \Big[  \frac{\partial \log p(y|u, \boldsymbol{\hat{\theta}}) }{\partial\theta_i} \frac{\partial \log p(y|u, \boldsymbol{\hat{\theta}})}{ \partial\theta_j} \Big] \]
	We received that:
	\begin{align*}
	D_{\text{KL}} \big(P_{u,y}(\boldsymbol{\theta} + \Delta \boldsymbol{\theta})|| P_{u,y}(\boldsymbol{\theta}) \big) = \\
	 \frac{1}{2}  (\boldsymbol{\theta} - \boldsymbol{\hat{\theta}})^T {\bf \hat{F}} (\boldsymbol{\theta} - \boldsymbol{\hat{\theta}}) + \mathcal{O}(\|\Delta \boldsymbol{\theta}\|^3)
	\end{align*}
\end{proof}
Now we can summarize the proof for Theorem 2: 
\begin{proof}
	Adding the relationship from \cite{ollivier2018online}: ${\bf \hat{F}}_{t|t-1} = \frac{1}{t} {\bf P}_{t|t-1}^{-1}$, and combining the results from Lemma 1 and Lemma 2, our objective function can be approximated as:
	{\small 
	\begin{align*}
	 L^{\text{EKF}}_t(\boldsymbol{\theta}_t)	& = \frac{1}{2}  \delta({\bf u}_t; \boldsymbol{\theta}_t)^\top {\bf P}_{{\bf n}_t}^{-1}  \delta({\bf u}_t; \boldsymbol{\theta}_t) \\
	& \quad +  \frac{1}{2}(\boldsymbol{\theta}_t - \boldsymbol{\hat{\theta}}_{t|t-1})^\top {\bf P}_{t|t-1}^{-1} (\boldsymbol{\theta}_t - \boldsymbol{\hat{\theta}}_{t|t-1}) \\
	& = C +  N \mathbb{E}_{\hat{Q}_u} [D_{\text{KL}}(\hat{Q}_{y|u} || P_{y|u}(\boldsymbol{\theta}))] \\
	& \quad +  \frac{t}{2}(\boldsymbol{\theta}_t - \boldsymbol{\hat{\theta}}_{t|t-1})^\top {\bf \hat{F}}_{t|t-1} (\boldsymbol{\theta}_t - \boldsymbol{\hat{\theta}}_{t|t-1})\\
	& \approx C +  N \mathbb{E}_{\hat{Q}_u} [D_{\text{KL}}(\hat{Q}_{y|u} || P_{y|u}(\boldsymbol{\theta}))] \\
	& \quad +  t \cdot D_{\text{KL}} \big(P_{u,y}(\boldsymbol{\theta} + \Delta \boldsymbol{\theta})|| P_{u,y}(\boldsymbol{\theta}) \big) \\
	\end{align*}
	}
	which completes the proof.
\end{proof}

\section{Experimental details} 
Our experiments are based on the baselines implementation  \cite{baselines} for PPO and TRPO. We used their default hyper parameters, and only changed the optimizer for the value function from Adam to KOVA. For brevity, we bring here the network architecture and the hyper parameters for each algorithm. 

{\bf PPO:}
Following \citep{schulman2017proximal}, the policy network is a fully-connected MLP with two hidden layers, 64 units and tanh nonlinearities. The output of the policy network is the mean and standard deviations of a Gaussian distribution of actions for a given (input) state.
The value network is a fully-connected MLP with two hidden layers, 64 units and tanh nonlinearities. The output of the value network is a scalar, representing the value function for a given (input) state. PPO uses the GAE estimator for the advantage function \cite{schulman2015high}. 
In Tables \ref{PPO_table} and \ref{Kalman_for_PPO_table} we present the hyper parameters for the PPO experiments. The Horizon represents the number of timesteps per each policy rollout.

\begin{table}[htp]
	\parbox[t]{1.\linewidth}{
		\centering
		\caption{PPO hyper-parameters used for the Mujoco tasks}
		\begin{tabular}{|c|c|}
			\hline
			{\bf Hyper-parameter} & {\bf Value} \\ \hline \hline
			Horizon &  2048 \\ \hline 
			Adam learning rate & $3 \cdot 10^{-4}$  \\ \hline
			Num. epochs & 10 \\ \hline
			Minibatch size & 64 \\ \hline
			Discount $(\gamma)$ & 0.99 \\ \hline
			GAE parameter $(\lambda)$ & 0.95 \\ \hline
			Clip range & 0.2 \\ \hline
		\end{tabular}
		\label{PPO_table}
	} \quad \quad
	\parbox[t]{1.1\linewidth}{
	\centering
	\caption{KOVA hyper-parameters used for VF optimization in PPO}
	\begin{tabular}{|l|l|}
		\hline
		{\bf Hyper-parameter} & {\bf Value} \\ \hline \hline
		KOVA learning rate & $1.0$ (Swimmer, HalfCheetah,   \\ 
		& \quad \ \ \ Walker2d) \\ 
		& $0.1$ (Hopper, Ant) \\ \hline
		${\bf P}_{{\bf n}_t}$ type & max-ratio \\ \hline
		$\eta$ &  $0.1$ (Hopper, HalfCheetah, Ant) \\ 
		& $0.01$ (Swimmer, Walker2d)  \\ \hline
	\end{tabular}
	\label{Kalman_for_PPO_table}
	}
\end{table} 

{\bf TRPO:}
The policy network and the value network are the same as described for PPO, only with 32 units instead of 64. TRPO also uses the GAE estimator. In Tables \ref{TRPO_table} and \ref{Kalman_for_TRPO_table} we present the hyper parameters for the TRPO experiments.

\begin{table}[htp]
	\parbox[t]{1.\linewidth}{
	\centering
	\caption{TRPO hyper-parameters used for Mujoco tasks}
	\begin{tabular}{|c|c|}
		\hline
		{\bf Hyper-parameter} & {\bf Value} \\ \hline \hline
		Horizon &  1024 \\ \hline 
		Batch size & 64 \\ \hline
		Discount $(\gamma)$ & 0.99 \\ \hline
		GAE parameter $(\lambda)$ & 0.98 \\ \hline
		Max KL & 0.01 \\ \hline
		Conjugate gradient iterations & 10 \\ \hline
		Conjugate gradient damping & 0.1 \\ \hline
		VF iterations & 5 \\ \hline
		VF learning rate & $10^{-3}$ \\ \hline
		Normalize observations & True \\ \hline
	\end{tabular}
	\label{TRPO_table}
	}
	\quad \quad 
	\parbox[t]{1.1\linewidth}{
	\centering
	\caption{KOVA hyper-parameters used for VF optimization in TRPO}
	\begin{tabular}{|l|l|}
		\hline
		{\bf Hyper-parameter} & {\bf Value} \\ \hline \hline
		KOVA learning rate & $1.0$ (Swimmer, Hopper)  \\ 
		& $0.1$ (HalfCheetah) \\ 
		& $0.01$ (Ant, Walker2d) \\ \hline
		${\bf P}_{{\bf n}_t}$ type & max-ratio \\ \hline
		$\eta$ & $0.01$ \\ \hline
	\end{tabular}
	\label{Kalman_for_TRPO_table}
	}
\end{table} 

\iffalse
{\bf DDPG:}
The policy network is a fully-connected MLP with two hidden layers, 64 units and tanh nonlinearities. The output of the policy network is the action for a given (input) state. The value (Q-function) network is a fully-connected MLP with two hidden layers, 64 units and tanh nonlinearities. The output of the value network is a scalar, representing the Q-function for a given (input) state and action.
\begin{table}[htp]
	\centering
	\begin{tabular}{|c|c|}
		\hline
		Hyper-parameters & Value \\ \hline \hline
		Batch size & 64 \\ \hline
		Discount $(\gamma)$ & 0.99 \\ \hline
		Num. epoch cycles & 20  \\ \hline
		Num. rollout steps & 100  \\ \hline
		Num. train steps per epoch cycle & 50  \\ \hline
		Noise type & adaptive parameters noise \\
		& with stddev=0.2  \\ \hline
		Normalize observations & True  \\ \hline
		VF l2 regularization & $10^{-2}$  \\ \hline
		Policy learning rate & $10^{-4}$  \\ \hline
		VF learning rate & $10^{-3}$  \\ \hline
		critic l2 regularization & $10^{-2}$  \\ \hline
		$\tau$ (for soft target updates) & 0.01 \\ \hline
	\end{tabular}
	\caption{DDPG hyper-parameters used for Mujoco tasks}
	\label{DDPG_table}
	
	\bigskip
	\centering
	\begin{tabular}{|c|c|}
		\hline
		Hyper-parameters & Value \\ \hline \hline
		Kalman learning rate & $1$  \\ \hline
		${\bf P}_{{\bf n}_t}$ type & batch size \\ \hline
		$\eta$ & $0.01$ \\ \hline
	\end{tabular}
	\caption{Kalman hyper-parameters used for value function optimization in DDPG}
	\label{Kalman_for_DDPG_table}
\end{table}
\fi 
%\bibliography{KalmanRL}
%\bibliographystyle{icml2019}

\end{document}